\def\vc{{\bm{c}}}
\def\vq{{\bm{q}}}
\def\vC{{\bm{C}}}
\def\vE{{\bm{E}}}
\def\vK{{\bm{K}}}
\def\vP{{\bm{P}}}
\def\vQ{{\bm{Q}}}
\def\vR{{\bm{R}}}
\def\vV{{\bm{V}}}
\def\vW{{\bm{W}}}
\newcommand{\bull}{{\tiny $\bullet~~$ }}
\newcommand{\myul}[2][black]{\setulcolor{#1}\ul{#2}\setulcolor{black}}
\newcommand{\upps}{\textsl{\textsc{ProPS}}\xspace}
\newcommand{\adapter}{\textsl{\textsc{adapter}}\xspace}
\newcommand{\camtl}{\textsl{\textsc{ca-mtl}}\xspace}
\newcommand{\prefix}{\textsl{\textsc{prefix}}\xspace}
\newcommand{\prefixplus}{\textsl{\textsc{prefix++}}\xspace}
\newcommand{\trsfp}{\textsl{\textsc{trsf-p}}\xspace}
\newcommand{\utrsfp}{\textsl{\textsc{s-trsf-p}}\xspace}
\newcommand{\bart}{\textsl{\textsc{bart}}\xspace}
\newcommand{\mbart}{m\textsl{\textsc{bart}}\xspace}
\newcommand{\tfive}{\textsl{\textsc{T5}}\xspace}
\newcommand{\highlight}[1]{\colorbox{blue!10}{#1}}
\definecolor{mygray}{gray}{0.4}
\newcommand{\g}[2]{#1\textsubscript{\textcolor{mygray}{$\pm$#2}}}
\definecolor{high}{HTML}{F0FFFF}  
\definecolor{low}{HTML}{007FFF}  
\newcommand*{\opacity}{100}
\newcommand*{\minval}{0.0}
\newcommand*{\midval}{0.7}
\newcommand*{\maxval}{1.0}
\newcommand{\gradient}[1]{
    \ifdimcomp{#1pt}{<}{\midval pt}{
        \pgfmathparse{int(round(100*(abs(#1-\midval)/(\midval-\minval))))}
        \xdef\tempa{\pgfmathresult}
        \hspace{-0.6em}\cellcolor{high!\tempa!low!\opacity} #1 \hspace{-0.6em}
    }
    {
        \pgfmathparse{int(round(100*(abs(#1-\midval)/(\maxval-\midval))))}
        \xdef\tempa{\pgfmathresult}
        \hspace{-0.6em}\cellcolor{high!\tempa!low!\opacity} #1 \hspace{-0.6em}
    }
}
\newcommand*{\minvalr}{23.0}
\newcommand*{\maxvalr}{30.7}
\newcommand{\gradientresult}[1]{
    \pgfmathparse{int(round(100*(abs(#1-\minvalr)/(\maxvalr-\minvalr))))}
    \xdef\tempa{\pgfmathresult}
    \hspace{-0.6em}\cellcolor{low!\tempa!high!\opacity} #1 \hspace{-0.6em}
}
\theoremstyle{plain}
\newtheorem{theorem}{Theorem}[section]
\newtheorem{example}[theorem]{Example}
\newtheorem{proposition}[theorem]{Proposition}
\newtheorem{lemma}[theorem]{Lemma}
\newtheorem{corollary}[theorem]{Corollary}
\theoremstyle{definition}
\theoremstyle{remark}
\newtheorem{remark}[theorem]{Remark}
\newcommand{\mycomment}[3]{}
\newcommand*\circled[1]{\tikz[baseline=(char.base)]{
            \node[shape=circle,draw,inner sep=0.5pt] (char) {#1};}}
\title{On Conditional and Compositional Language Model Differentiable Prompting}
\author{
  Jonathan Pilault$^1$\thanks{Work completed during an internship at Amazon Research.}, Can Liu$^2$, Mohit Bansal$^3$, Markus Dreyer$^2$ \\
  \affiliations
  $^1$Mila - Québec AI Institute, Polytechnique Montréal\\
  $^2$Amazon Alexa, \\
  $^3$University of North Carolina at Chapel Hill \\
}
\begin{document}

\maketitle

\begin{abstract}
Prompts have been shown to be an effective method to adapt a frozen Pretrained Language Model (PLM) to perform well on downstream tasks. Prompts can be represented by a human-engineered word sequence or by a learned continuous embedding. 
In this work, we investigate conditional and compositional differentiable prompting.
We propose a new model, Prompt Production System (\upps), which learns to \emph{transform} task instructions or input metadata, into continuous prompts that elicit task-specific outputs from the PLM. 
Our model uses a modular network structure based on our neural formulation of Production Systems, which allows the model to learn discrete \emph{rules} -- neural functions that learn to specialize in transforming particular prompt input patterns, making it suitable for compositional transfer learning and few-shot learning. 
We present extensive empirical and theoretical analysis and show that \upps consistently surpasses other PLM adaptation techniques, and often improves upon fully fine-tuned models, on compositional generalization tasks, controllable summarization and multilingual translation, while needing fewer trainable parameters.
\end{abstract}

\section{Introduction}

\label{sec:intro}

Humans have a remarkable ability to solve cognitive tasks by following instructions and using prior relevant experiences. 
Can a machine do the same? Recent research has shown that textual task instructions \cite{engineering_instructions} appended to a Pretrained Language Model's (PLM) input can yield successful results in several NLP tasks such as classification 
\cite{schick-schutze-2021-exploiting}, image captioning \cite{multimodal-prompt} and question-answering \cite{nat_instruction_qa} \emph{without fine-tuning the full model}. 
However, PLMs do not always accurately grasp the meaning of textual prompts \cite{nat_instruction_qa} and often display high sensitivity to the word formulation \cite{prompt-sensitivity} of instructions. 
Further, unless the model exceeds billions of parameters, full-model fine-tuning still typically outperforms human-engineered prompts for frozen PLMs \cite{lester2021power}. 
To mitigate such issues, differentiable and continuous prompting techniques \cite{prefix} have emerged as a viable alternative (illustrated in Figure~\ref{subfig:prepend}). 
However, current techniques optimize prompts solely based on the task learning objective without leveraging information embedded in textual prompts.

\begin{figure}[ht!]
\centering
\begin{tabular}{c}
\hspace{-5mm}
\adjustbox{valign=b}{
    \subfloat[Prepending generated differentiable prompts.\label{subfig:prepend}]{%
        \includegraphics[width=.95\linewidth]{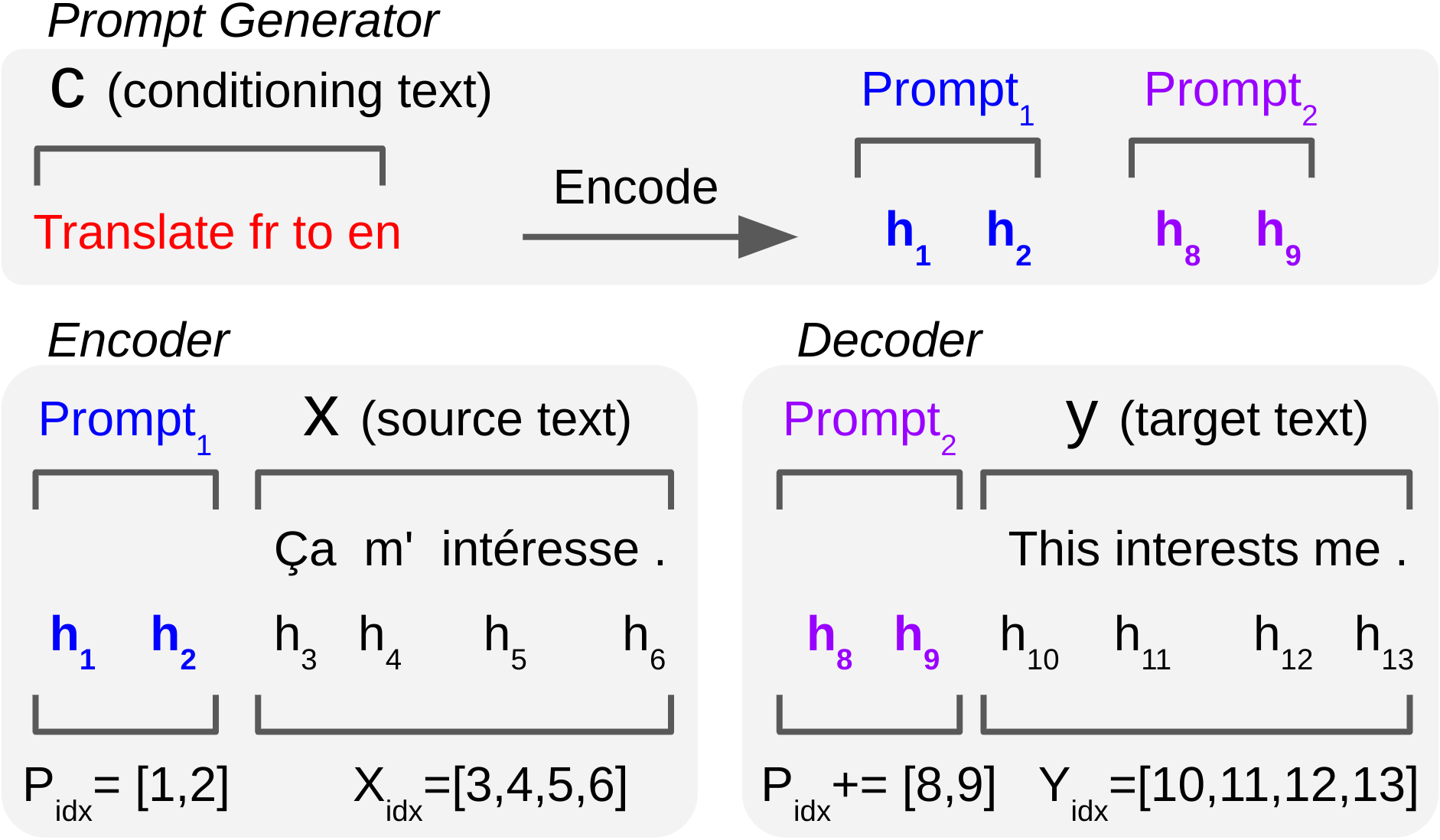}} 
}
\end{tabular}
\caption{\small \upps is a differentiable and conditional prompt generator that outputs a sequence of vectors that is prepended to PLM hidden states at positions $\in P_{\text{idx}}$
as seen in (a). 
}\label{fig:method}
\end{figure}

We argue that simple task instructions (e.g., "Translate English to French"), along with other already available or easily extractable metadata, such as the text category of an input sample (e.g., "sports"), can be valuable conditioning context for a model to \emph{generate} a continuous prompt. 
The prompt would be specific to the particular task and input, and would elicit task- and input-specific output from a frozen PLM. 
We propose a model that can, given a PLM, be trained to \emph{transform} task- and input-specific text into conditional continuous prompt embeddings. 
The advantage of using text conditioning as input to a continuous prompt generator is that our model can, once trained, repurpose previously unseen instructions from previously trained tasks, without having to retrain a PLM. 
By sharing aspects in the task instructions that were observed during training of related tasks, the continuous prompts of new tasks are more suitable for transfer learning and few-shot learning. 
With our method, we seek to combine the advantages of simple task descriptions, which provide useful context, and learned continuous prompts, which work well even with smaller PLMs. 
In addition, we propose to enhance our model's compositional ability for few-shot and transfer learning by using a modular network structure, based on our own neural formulation of Production Systems (ProdSys, \cite{newell_problem_solving}). 

\begin{figure}[ht!]
\small
\centering
\includegraphics[width=0.97\linewidth, angle=0]{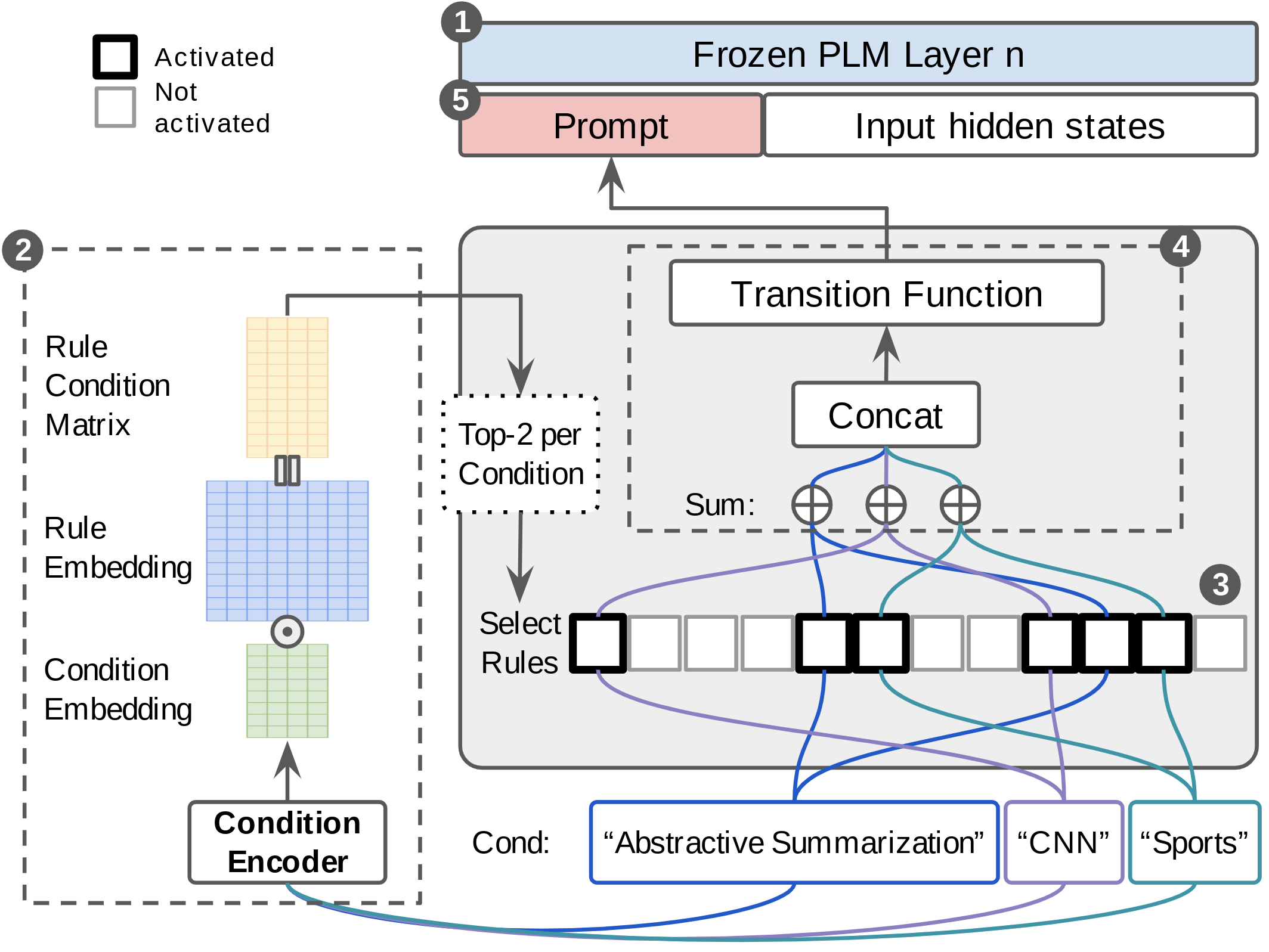}
  \caption{\upps architecture with top-$k$=2 rule selection as an example. ``Cond''= conditioning text, example inputs to \upps. \label{subfig:pps_layer}}
\end{figure}

ProdSys has several desirable properties. First, the ability to reason in a propositional format (if - then) allows the disentanglement and \emph{modularization of knowledge} about a task or input sample. 
Second, using the fact that propositions are discrete and independent from each other, the system can combine productions and \emph{compose acquired knowledge}. 
We hypothesize that such properties are desirable to improve generalization, transfer of learning, and reusability of prior task descriptions when generating prompts. 
ProdSys models symbolic knowledge in the form of ``IF'' (condition or situation or task) - ``THEN'' (action or instructions or response) rules, called productions, and a set of mechanisms for matching and applying such rules. 
Generally, ProdSys contains four basic components \cite{ai_quest}:
\circled{1} a long-term memory, a knowledge base that contains general information independent of the current situation/task and which persists over time and that may be used as a condition;
\circled{2} a working memory that contains temporary information about the current situation or task and that also operates as a sparse rule selector; 
\circled{3} a rule memory that contains a collection of productions;
\circled{4} an inference engine that maps a condition $C_j$ to an instruction $I_i$ in the form $C_j \xrightarrow[]{R_i} I_j$, for selected productions $R_i$. 

\begin{figure*}[ht]
\centerline{
\includegraphics[trim={0 0 0 0}, width=0.90\textwidth]{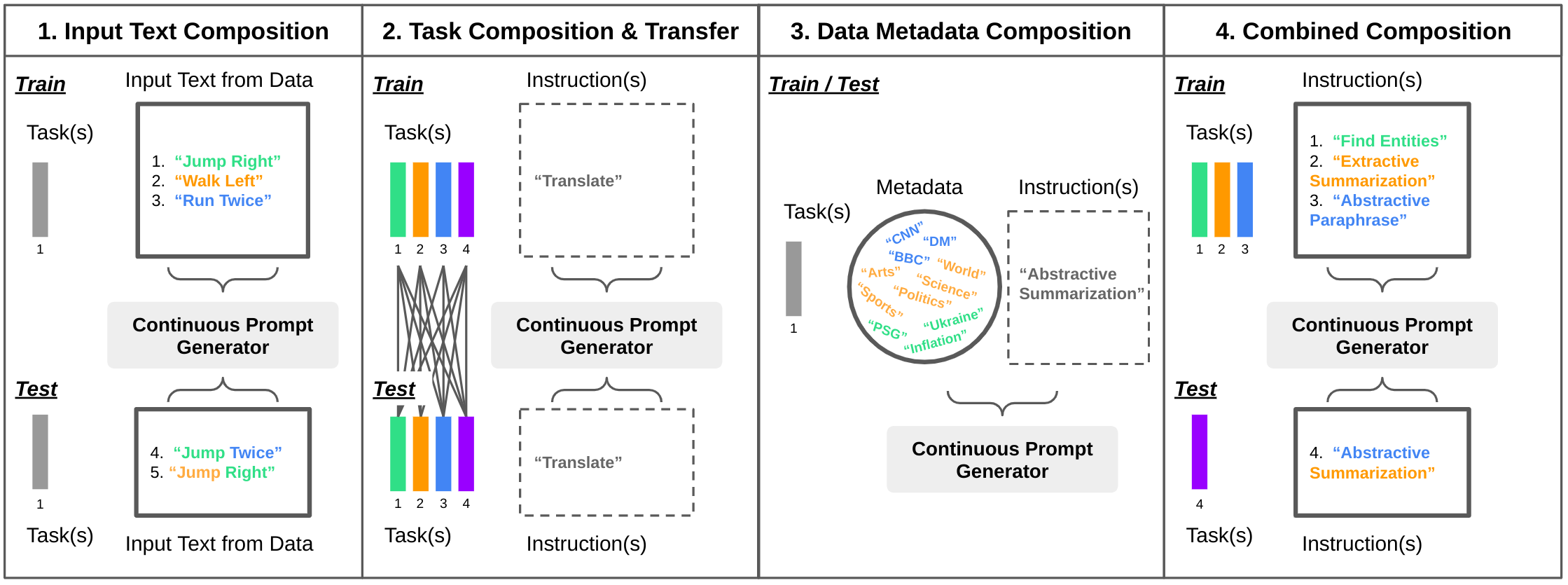}}
\caption{
\small Outline of the capabilities investigated by training composable inputs, tasks and/or metadata and tested on different combinations. 
The \textbf{colored} artifacts in the solid line circles or squares represent information that can be composed for controlled text generation.
}
\label{fig:evaluations}
\end{figure*}

We draw a parallel between the process of generating continuous prompts (differentiable instructions for the PLM) from textual conditioning (conditions) and rule productions. 
Specifically, as shown in Fig.~\ref{subfig:pps_layer}: 
a frozen PLM can be used as a persistent knowledge base~\cite{plms_as_kg} 
or \circled{1} a long-term memory\footnote{The neural long-term memory is used in two ways: the conditions can be hidden states from the previous PLM layer and the gradient updates are dependent of the PLM pretrained weights.}; 
a Condition Encoder that maps textual conditioning sequences to a fixed-sized vector to create the Rule-Condition matrix works similarly to \circled{2} a working memory by encoding and sparsely matching conditions and rules; 
the selected differentiable rules, the summation operation and transition function allows the model to map a condition to differentiable instructions and work as \circled{3} a rule memory and \circled{4} an inference engine; finally, instructions \circled{5} are, in our case, differentiable prompts that we prepend to a frozen PLM's hidden states.
Such a system, which we call Prompt Production System (\upps), shares the four main building blocks of ProdSys, which allows it to conditionally generate prompts while compartmentalizing neural propositional knowledge via its condition-rule selection mechanism.
As we will discover, such mechanism equips \upps with an ability to reuse and re-purpose knowledge from examples of the same tasks or instructions of intermediate and jointly trained tasks that allow the model to better control text generation.
This construction allows \upps to surpass all other prompt generator baselines, and often fully fine-tuned models, on summarization, translation, and semantic parsing benchmarks.

Our contributions are the following:
\begin{itemize}[noitemsep,partopsep=0pt,topsep=0pt,parsep=0pt,leftmargin=4mm]
    \item We provide strong Conditional Prompt Generator baselines (Section~\ref{sec:cpg}) on multiple tasks and datasets, showing the advantage of conditioning continuous prompts.
    \item We propose a new architecture, called \upps, a prompt generator based on ProdSys (Section~\ref{sec:upps}).
     \item We demonstrate \upps’s ability to compose knowledge in the four settings described above in both the full-data regime and low-resource settings.
    \item To better understand \upps, we provide theoretical and empirical analysis in Appendix~\ref{sec:analysis}.
\end{itemize}

We evaluate our methods in four situations as illustrated in Fig.~\ref{fig:evaluations} to check if the following can be composed: 
(1) input text from training set input text segments in Section~\ref{sec:input_text},
(2) tasks when knowledge is transferred between all tasks during training in Section~\ref{sec:multitask},
(3) metadata about the input data in Section~\ref{sec:meta-info}, 
(4) all of the above combined in Section~\ref{sec:composition}.

\section{Related Works}
\label{sec:lit}

\textbf{Differentiable Prompting} has seen increased interest lately due to the method's ease of use and compelling performance. 
Unlike prior work, \upps is a conditional prompt generator.
There are two differentiable prompting methods that are close to \upps. 
The first is PTR \cite{prompt_tuning_rule_2021} that breaks an input into a series of predicates of first order logic.
However, unlike \upps, sub-prompts are manually designed and PTR is mainly useful for classification tasks. 
Concurrently to our work, PGT \cite{prompt_transfer_nlg_2022} also attempts to transfer prompts learned from multiple representative support text generation tasks (e.g. paraphrasing, question answering). 
Instead of using an attention mechanism to select task-specific prior prompts, \upps compartmentalizes and reconstructs prior prompt knowledge using differentiable rules, allowing module sharing across tasks.
Using fewer support tasks, \upps outperforms PGT on XSum on the fully-supervised and few-shot learning settings.

\noindent\textbf{Modular Networks} are a type of neural architecture that splits computation into specialized and independent neural sub-systems (i.e., modules). 
Modular Networks borrow from conditional computation methods, where modules are
chosen dynamically given a certain criterion. Such module specialization has been shown empirically to improve generalization to unseen inputs %
\cite{NEURIPS2018_310ce61c} or tasks \cite{goyal2021recurrent}. 
In a multitask learning setting, several recent approaches have leveraged modularity to decompose knowledge \cite{ostapenko2021continual} such that it could be reused for a new task. 
Similar to \citeauthor{combine_skills_2022}, our method also assumes that tasks can be decomposed into a set of modules that are shareable and reusable across tasks. \upps differs in three ways: 
(1) we use both tasks and inputs to condition computation; 
(2) sub-module selection is informed by external information (text conditioning);
(3) we use differentiable prompts on a PLM instead of adapters.

\section{Methodology}
\label{sec:method}

We now present the two components on which our method is based. 
We will first provide an overview of Conditional Prompt Generators (CPG), linking conditional continuous prompts and adapters. 
Then, we discuss \upps, our novel construction of CPGs using Neural Production Systems.


\subsection{Conditional Prompt Generators (CPG)} 
\label{sec:cpg}
Prompt Generators such as Prefix-Tuning \cite{prefix}, Prompt-Tuning \cite{lester2021power} and P-Tuning \cite{p-tuning} are all \emph{Unconditional Prompt Generation (UPG)} methods used to adapt PLMs. 
Generally, prompt vectors $\vP_k,\vP_v \in \mathbb{R}^{T_P \times d}$, of length $T_P$ and dimension $d$, are concatenated to the key $\vK$ and value $\vV$ vectors to modulate the frozen PLM attention output of each head $H$. 
We get for a query $\vQ_t$ at step $t$:
\begin{equation}
\label{eq:prompt-adapter}
\begin{split}
& H = \text{Attn}\big(\vQ_t, (\vP_k \big\Vert \vK), (\vP_v \big\Vert \vV)\big) \\
& =  \sigma\big(\vQ_t (\vP_k \big\Vert \vK)^\top\big) (\vP_v \big\Vert \vV) \\
& = \alpha \cdot \sigma(\vQ_t\vP_k^\top)\vP_v + (1 - \alpha) \cdot \sigma(\vQ_t\vK^\top)\vV \\
& = \alpha \cdot \underbrace{ \text{Attn}(\vQ_t, \vP_k, \vP_v) }_{\text{\textcolor{orange}{Learnable task adaptation}}} + (1 - \alpha) \cdot \underbrace{ \text{Attn}(\vQ_t, \vK, \vV) }_{\text{\textcolor{blue}{Frozen PLM attention}}},
\end{split}
\end{equation}
where $\sigma$ is the softmax function and $\alpha=\alpha(\vQ_t, \vK, \vP_k)$ is a learnable gating function:
\begin{equation}
\label{eq:gate}
\alpha = \frac{\sum_i\exp (\vQ_t\vP_k^\top)_i}{\sum_i \exp (\vQ_t\vP_k^\top)_i + \sum_j \exp(\vQ_t\vK^\top)_j}.
\end{equation}

\emph{Conditional Prompt Generation}, as in \upps, is achieved by plugging condition-dependent prompt vectors $[\vP_k, \vP_v] = [\vP_k(c_t), \vP_v(c_t)] = \vP(c_t)$ in Equation \ref{eq:prompt-adapter}, where $\vP_k, \vP_v$ is created by splitting the last dimension of $\vP(\vc_t) \in \mathbb{R}^{T_P \times 2d}$ in two and $\vc_t$ is the word embedding at the $t^{\text{th}}$ condition token.

\begin{table}[h]

\begin{center}
\scriptsize
\begin{tabular}{ll|l|l}
\toprule
                   & Setting  & Conditioning text ($C$) examples & Task     \\
\midrule
\multirow{2}{*}{1} & Input        & ``\{Look, Opposite, Left, Thrice\}'' +    & \multirow{2}{*}{cg} \\
                   & Composition  & +``\{and\}''+``\{Jump, Right, Thrice\}''   &                  \\
                   \hline
\multirow{2}{*}{2} & Task  & {``Translate fr to en''+ [French Text]},  & \multirow{2}{*}{nmt} \\
                   & Composition  & {``Translate es to en''+ [Spanish Text]} &                  \\
                   \hline
\multirow{2}{*}{3} & Metadata  & ``\{cnn, dm, bbc\}'' +      & \multirow{2}{*}{sum} \\
                   & Composition &  ``\{sports, politics, business, local, ...\}'' &                 \\
                   \hline
\multirow{2}{*}{4} & Instructions,  & ``\{Find Entities, Paraphrase,    & \multirow{2}{*}{sum} \\
                   & tasks, metadata &  Summarization\}'' + [metadata] &                 \\

\bottomrule
\end{tabular}

\end{center}
\caption{\small Conditioning text inputted to the prompt generator. \textbf{cg} = compositional generalization, \textbf{nmt} = multilingual translation and \textbf{sum} = summarization.}\label{tab:conditions_eg}
\end{table}

\paragraph{Conditions $C \in \mathcal{C}$} represent posterior knowledge about a task or dataset as well as instructions. The Conditions or conditioning text are inputs to the CPGs. We provide examples in Table~\ref{tab:conditions_eg} following the different evaluation settings depicted in Fig~\ref{fig:evaluations}. 
In certain cases, it can be the input data itself as in the compositional generalization task (e.g.: input ``walk right twice and walk left'' has the output ``turn\_right walk turn\_right walk turn\_left walk'').
The conditioning text is easily extractable.
For news summarization, we extracted the news outlet and the article type.
For the topic-focused news summarization, additional inputs such as the topic is provided with the dataset.
Before going through a word embedding layer, each condition is a textual description or an instruction about an example, a type of an example, a task or a dataset. 
Instructions were written by the authors. 
See Appendix \ref{sec:app_conditions} for more details. 

\subsection{Prompt Production Systems (\upps)}
\label{sec:upps}

\upps is inspired by Neural Production Systems (NPS, \citeauthor{goyal2021neuralprodsys}), the deep learning version of ProdSys. 
NPS is an end-to-end model that constructs object- or condition-centric representations on which independent learnable production rules are applied. 
\upps also has independent learnable production rules but there are three main differences with NPS. 
First, rules are applied to sub-sequences of variable lengths instead of a single vector. 
Second, \upps is tied to a PLM through gradient updates and the appendage of prompt vectors to the input hidden states whereas NPS was not built to interact with a PLM. 
Third, our rule selection mechanism has a different design that allows rule overlap amongst conditions of a single instance. 
Unlike NPS that uses MLPs, our rules are attention heads \cite{vaswani2017_attn}. 
\upps consists of $N$ separately differentiable rules, $\{\vR_1, \vR_2,.., \vR_N\}$ that are each represented by $\vR_i = (\Vec{\vR_i}, \widetilde{\mathrm{H}}_i)$, where $\Vec{\vR_i}$ is a rule embedding vector and  $\widetilde{\mathrm{H}}_i$ is an attention head. 
As summarized in Algorithm~\ref{alg:upps}, each condition sequence $\mathcal{S}_C = \langle \vc_t | t \in \{1, \ldots,  T_C\}\rangle_C$, where $T_C$ is the sequence length of condition $C \in \mathcal{C}$ and $\mathcal{C}$ is the set of conditions, is first encoded by a Condition Encoder $f(\cdot)$. 
We use a light-weight attentive max pooling layer \cite{attentive-pooling} to encode condition sequences into a fix sized representation. 
Each condition vector $\Vec{\vC}$ is then concatenated together to create a Condition Embedding matrix $\begin{bmatrix} \vE \end{bmatrix} \in \mathbb{R}^{|\mathcal{C}| \times d}$, where $d$ is the embedding dimension. 
Similarly, the learnable Rule Embedding matrix $\begin{bmatrix} \vR \end{bmatrix}  \in \mathbb{R}^{n \times d}$, for $n$ rules (i.e., attention heads), is created by concatenating each rule representation $\Vec{\vR_i}$. 
The rule-condition matrix is formed from the dot product operation $\begin{bmatrix} \vE \end{bmatrix} \cdot \begin{bmatrix} \vR \end{bmatrix}^{\top} \in \mathbb{R}^{|\mathcal{C}| \times n}$ and provides a template that maps a condition to a set of rules. 
We use a Gumbel Top-$k$, our extension\footnote{The argmax of a Gumbel softmax would only point to one rule. To select multiple rules we apply a differentiable top-k operator.} of the Gumbel Softmax 
, to choose $k$ rules out of $n$ for any of the conditions in $\mathcal{C}$. 
As discussed in Section \ref{sec:composition}, the ability to choose rules allows \upps to compose modules. 
Typically $k=\frac{n}{2} > 1$ is the best choice of chosen rules as it allows a maximum combination of sub-modules (see Appendix \ref{app:theory} and \ref{app:k_N_ablation} for in-depth theoretical and empirical analysis). 
The rest of the steps are similar to a Transformer layer's. 
However, instead of using all attention heads $\widetilde{\mathrm{H}}_{1 \ldots n}$ to transform a sequence, condition sub-sequences use a varying subset of $k$ heads. 
At step $t$, our prompt vector $\vP_t$ is generated in the following way:
\begin{align}
   \vP_t(\vc_t) &=  \vW^o\sum_{r \in \{r\}} \widetilde{\mathrm{H}}_{r}(\text{LN}(\vc_t)) \label{eq:cond_p},
\end{align}
where LN is layer normalization 
, $\vc_t$ is the condition word embedding at step $t$, $g(\cdot)$ is the transition function (i.e., intermediate Transformer layer), $\vW^o$ is a learnable weight matrix, $\Vert$ is the concatenation operator, $\{r\}$ is the set of indices identifying selected rules 
$\vP_t$ can be applied $L$ times for an $L$ layer \upps.
Note that each \emph{\upps layer shares weights} to allow greater parameter efficiency and knowledge reusability.

\section{Main Results}
\label{sec:main_results}

In the next sections, we discuss our experimental setup and evaluation results across the four settings seen in Fig.~\ref{fig:evaluations} for both full data and low resource settings. 
\citeauthor{t5} observe significant gaps between full-model fine-tuning (FFT) and adapters \cite{pmlr-v97-houlsby19,nmt-adapter} on both summarization and translation tasks. 
The gap increases with increasing training data size. 
Recently, several methods claimed results on-par with FFT on NLU benchmarks \cite{guo2020parameter,hu2021lora} such as GLUE \cite{glue} or on relatively simple CNLG tasks such as E2E \cite{novikova-etal-2017-e2e}. 
However, our results indicate that such PLM adaptation may not generalize well across various architectures and for more complex tasks, high-resource benchmarks or large multi-task datasets. 
As demonstrated in this section (details below), \upps outperforms all other model adaptation techniques on \textbf{three high-resource benchmarks, two datasets in the low resource knowledge transfer setting and one high resource multilingual translation dataset}.

\subsection{General Setup}
\label{sec:gen_setup}

\begin{table}[h]

\begin{center}
\begin{footnotesize}

\begin{tabular}{l|c|ccc|c|c}
\hline
Model & adapted & \multicolumn{3}{c|}{condition}  & shared  & select \\
Name  & layers                       & n/a & emb & txt                     & weights & rule   \\
\hline
\adapter$^1$ & 100\%    & $\surd$ & & & & \\
\camtl$^2$   & 50\%     & & $\surd$ & & & \\
\prefix$^3$  & 100\%    & & & & & \\

\prefixplus* & 100\%    & & & $\surd$ & & \\
\trsfp*      & 33\%     & & & $\surd$ & & \\
\utrsfp*     & 33\%     & & & $\surd$ & $\surd$ &\\
\highlight{\textbf{\upps}} & 33\%     & & & $\surd$ & $\surd$ & $\surd$ \\

\hline
\end{tabular}
\end{footnotesize}

\caption{\small Baselines. $^1$\protect\cite{adapter}, $^2$\protect\cite{pilault2021conditionally}, $^3$\protect\cite{prefix}. *Our implementation. hid=hidden states; emb=task embedding; txt=textual input.}\label{tab:baselinemodels}
\end{center}

\end{table}

\paragraph{Baselines:} We compare \upps to four main model categories: fully fine-tuned (FFT) models
, adapters where trainable layers are inserted between frozen PLM layers, UPG and CPG described in Section \ref{sec:method}. 
A comparison of baselines, including Transformer-Prompt (\trsfp) and Transformer-Prompt with shared weights (\utrsfp) generators, is found in Table \ref{tab:baselinemodels} where we show the percentage of PLM layers adapted, whether the PLM hidden states or the attention is adapted, the types of conditions used (none, embeddings or text prompts).
The \adapter model uses trainable MLPs inserted in the PLM to transform hidden states. 
\camtl is type of multitask adapter that uses task embedding to conditionally adapt hidden states and the attention of the PLM.
\prefixplus is exactly the same as \prefix however we additionally append conditions textual prompt to the input text (e.g.: ``Abstractive Summarization, BBC, Sports'' or ``Jump Twice Jump Right'').
The baselines allow us to measure the effects of conditions (\prefix vs. \trsfp or \prefixplus), the affects of sharing weights (\trsfp vs. \utrsfp) and the affects of rule selection (\utrsfp vs \upps). 
We also provide side by side comparisons of \prefix, \utrsfp and \upps in Figure~\ref{fig:baselines} of the Appendix.
More details are found in Appendix \ref{app:baselines}.

\paragraph{Datasets, Training and Evaluation:} 
We study four Conditional Natural Language Generation (CNLG) datasets SCAN \cite{scan}, Europarl \cite{europarl}, XSum \cite{xsum} and Topic-CNN-DM \cite{topic-aware}.
The datasets and tasks for our experiment were chosen since text conditioning from metadata and instructions is similar to controllable text generation. 
We describe our datasets, training and evaluation setup in Appendix \ref{app:more_gen_setup}. 
\emph{The code and datasets will be made publicly available.}

\subsection{Can CPG Modularization Better Compose Instructions by Composing the Input Text?}
\label{sec:input_text}

\begin{table}[h]
    \footnotesize
    \centering
    \begin{tabular}[b]{lccccc}
        \toprule
        \multirow{3}{*}{\textbf{Model}} &      & Add  & Jump   
        &        &       \\
                                        & Add  & Turn & Around 
        &        &  Avg. \\
                                        & Jump & Left & Right  
        & Length &  MCD  \\
        \midrule   
        \tfive$^1$                  &  98.3 & 69.2 & 99.9  & 5.2  & 10.1 \\ 
        \prefix                     &  83.8 & 66.7 & 91.0  & 5.9 & 7.8 \\  
        \prefixplus                 &  85.1 & 68.1 & 92.3  & 6.4 & 8.5 \\  
        \utrsfp                     &  90.9 & 70.3 & 93.6  & 9.3 & 9.9 \\
        \highlight{\textbf{\upps}}  &  \textbf{99.2} & \textbf{72.9} & \textbf{100}  & \textbf{11.4} & \textbf{12.6} \\
        \bottomrule
    \end{tabular}
    \caption{\label{tab:semanticparsing-results} \small Compositional Generalization test accuracy from exact match on SCAN. $^1$Results from \protect\citeauthor{Furrer2020CompositionalGI}. 
    }
\end{table}

In the first type of evaluation depicted in Fig.~\ref{fig:evaluations}, we test whether modular CPGs such as \upps can compose instructions from the input.
We fist want to test if the model learns from training inputs such as ``jump right'' and ``walk left'', how to properly resolve unseen combinations ``jump left'' or ``walk right'' at test time.
We use a compositional generalization task made from the synthetic dataset SCAN \cite{scan}.
The SCAN dataset is constructed from a rule-based grammar that explicitly composes primitive commands.
\citeauthor{scan}
proposed various data splits to measure \emph{Compositional Generalization} 
by testing on commands constructed with new primitives, excluded complex command sequences or extended length (length generalization).
All adaptation techniques are built on top of a pretrained T5 model.
In the ``Add Jump'' split for example, the training set includes all of the compositional tasks excluding ``Jump'', which is only seen in isolation, and the testing set includes all compositions using ``Jump''. 
The inductive bias in \upps allows it to modularize knowledge via a sparse selection of attention heads.
Experimental results in Table \ref{tab:semanticparsing-results} show that \upps's internal knowledge structure is also composable. 
Not only does \upps surpass all other adaptation methods across splits by a large margin, \textbf{it also improves over \tfive FFT on five out of seven cases}, including on the more difficult Maximum Compound Divergence (MCD) datasets \cite{keysers2020measuring}. 
We follow the training setup of \citeauthor{Furrer2020CompositionalGI}.

\subsection{Can CPG Modularization Better Transfer and Compose Jointly Trained Tasks?}
\label{sec:multitask}


Now, we turn our attention to the second evaluation case in Fig.~\ref{fig:evaluations}: task composition. 
We hypothesize that modular CPG networks such as \upps will better segregate and compose knowledge from other tasks trained jointly. 
We select the Europarl \cite{europarl} multilingual dataset since it contains numerous parallel language corpora.
The Europarl multilingual corpora is also interesting since our instructions can leverage ``bridge tasks''.
For example, we can translate English to French (en $\rightarrow$ fr) at test time if we only trained on English to German (en $\rightarrow$ de) and German to French (de $\rightarrow$ fr) pairs (i.e., translate en to de and then de to fr to get en to fr). 
Further, it was shown that multilingual translation can benefit from positive transfer from one language (task) to the other.
As previously discussed, it is typically easier to outperform fully-finetuned models with parameter efficient adaptation methods in the low-resource setting.
However, it becomes increasingly difficult as the data size scales \cite{t5}.
We evaluate the performance $\overrightarrow{\text{yy-xx}}$ and $\overleftarrow{\text{yy-xx}}$ where $\text{yy},\text{xx} \in \{\text{de,fr,en,es}\}$ and $\text{yy} \neq \text{xx}$.
With most multi-task learning frameworks, it is paramount to maximize positive transfer while minimizing negative transfers.
Since \upps is able to choose up to $k$ rules to apply on a condition, multiple tasks can choose either common shared rules or independent rules,  we believe this mechanism helps enhance positive transfer (shared rules), and mitigate negative transfer (independent rules). 
We present our \mbart \cite{mbart} based multilingual translation results below. 

\begin{figure}[H]
\centering
\begin{tabular}{c}
\adjustbox{valign=b}{
    \begin{tabular}{@{}c@{}}
    \subfloat[\small Europarl trained with 50 samples\label{subfig:europarl50}]{%
        \includegraphics[width=.90\linewidth]{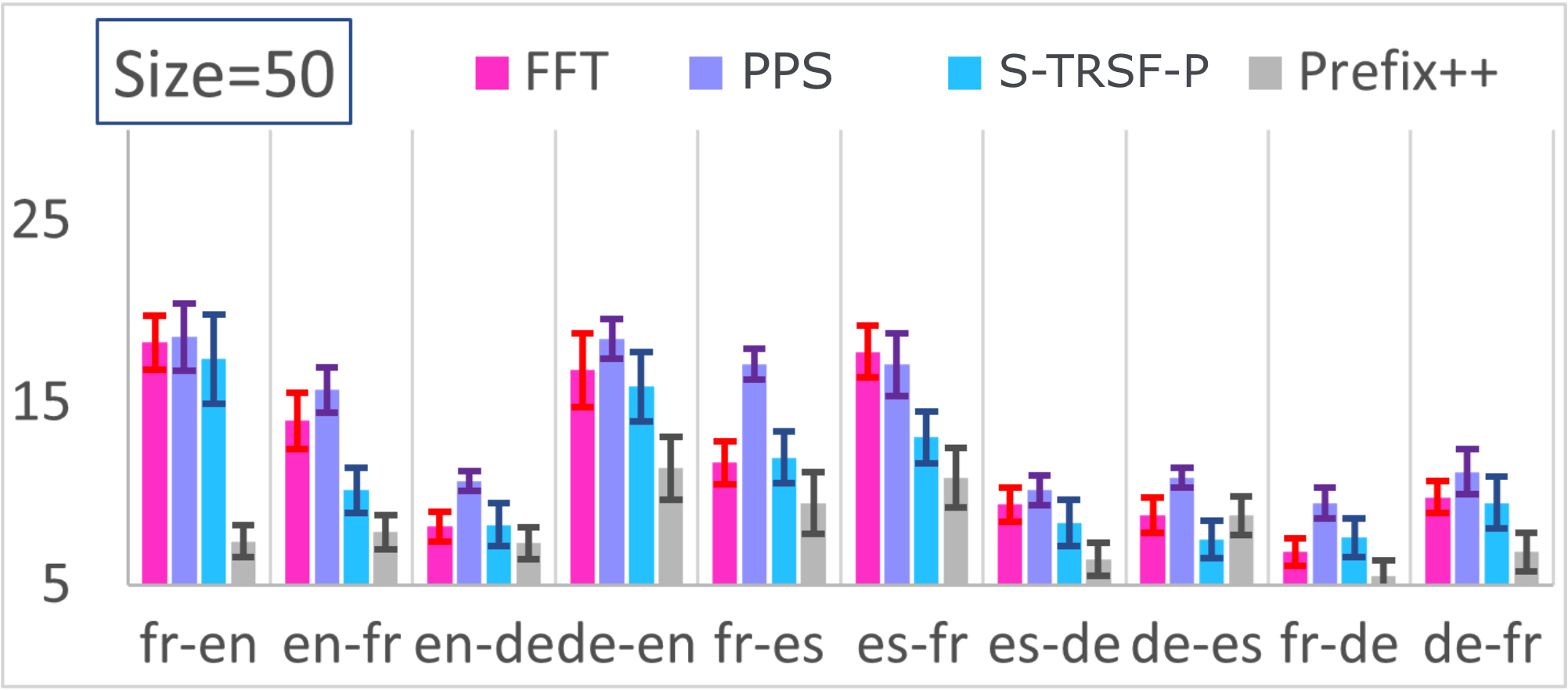}} 
        \vspace{-5pt} \\
    \subfloat[\small Europarl trained with 500 samples\label{subfig:europarl500}]{%
        \includegraphics[width=.90\linewidth]{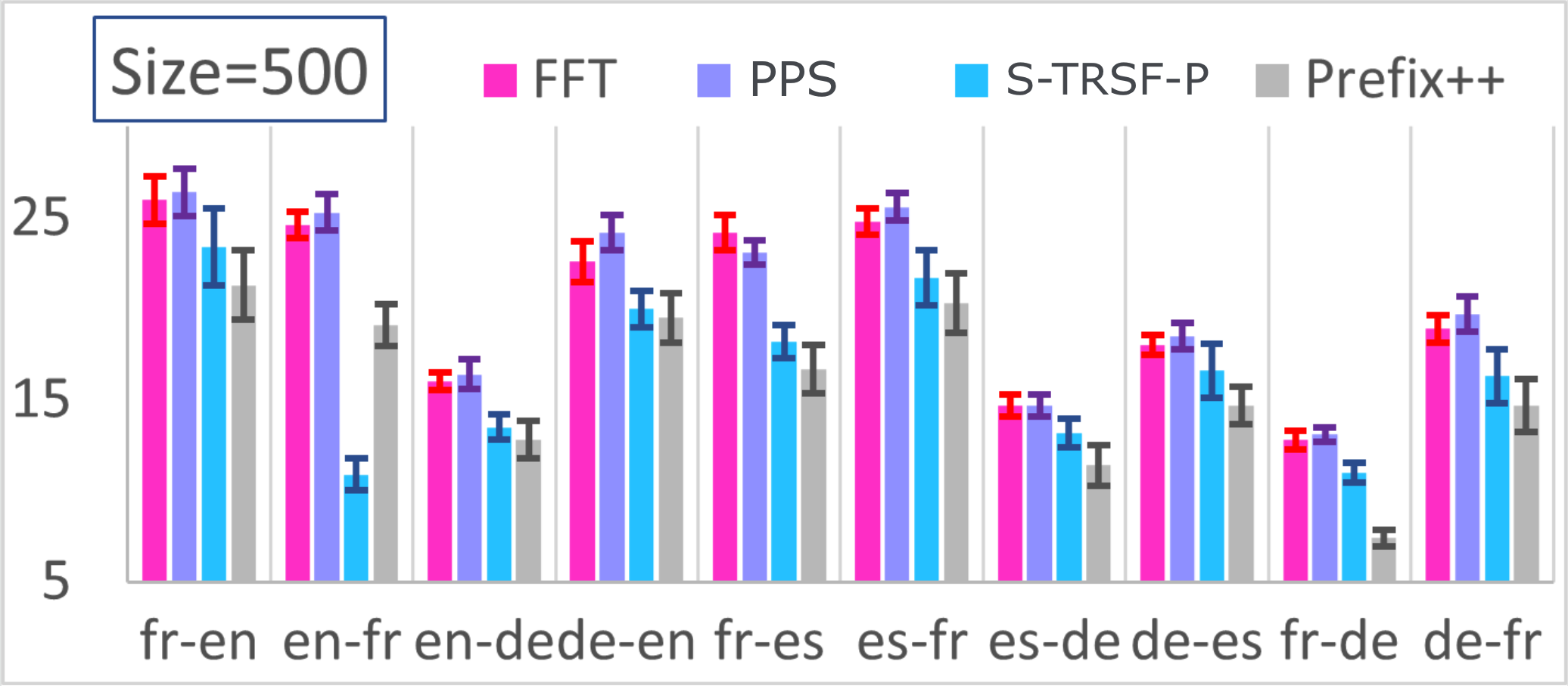}}
    \end{tabular}
}
\end{tabular}
\caption{\small Few-shot multitask. \textbf{PPS = \upps}. BLEU shown for 10 language pairs. Error bars are standard deviations of 3 random seeds.}\label{fig:low-rez-mtl}
\end{figure}

\paragraph{The few-shot case:}
In this experiment, each of the 10 language pairs include $50$ or $500$ training samples.
The task instruction have the form ``Translate \{source\} to \{target\}''. 
For \camtl, each translation pair generates different adapter weights from its task embedding. We do not run experiments for \adapter since \citeauthor{pilault2021conditionally} show that \camtl is better equipped to handle multiple jointly trained tasks.
In Figure~\ref{fig:low-rez-mtl}, we notice that both Condition Prompt Generators (\upps and \utrsfp) perform much better than \prefixplus. 
\upps improves over all other methods on both average performance and standard deviation, including FFT, for \textbf{9 out of 10 tasks} when each pair has $50$ training examples, and for \textbf{8 out of 10 tasks} when each pair has $500$ training examples.

\begin{table}[ht!]
    \footnotesize
    \centering
    \begin{tabular}[b]{l|c|cccc}
        \hline
        \multirow{2}{*}{\textbf{Model}} & \textbf{trained}  & \multicolumn{4}{c}{Europarl}\\
                          & \textbf{params}    & $\overrightarrow{\text{fr-en}}$ & $\overleftarrow{\text{fr-en}}$ & $\overrightarrow{\text{de-en}}$ & $\overleftarrow{\text{de-en}}$ \\
        \hline
        \mbart   & 100\%  & 40.10 & 39.54 & 37.32 & 30.19 \\
        \hline
        \adapter   & 14\% &  27.22 &  26.56 &  24.25 & 20.10 \\
        \camtl    & 13\%  &  36.81 &  36.74 &  33.07 & 26.17 \\
        \prefix      & 15\%  &  27.51 &  26.24 &  26.01 & 20.87 \\
        \prefixplus  & 15\%  &  28.75 &  27.40 &  26.93 & 21.38 \\
        \utrsfp   & \textbf{10\%}  &  36.76 &  36.66 &  33.02 & 26.24  \\
        \highlight{\textbf{\upps}} & \textbf{10\%} & \textbf{38.17}$^\dag$ & \textbf{38.09}$^\dag$ & \textbf{35.20}$^\dag$ & \textbf{28.39}$^\dag$ \\
        \hline
    \end{tabular}
    \caption{\label{tab:translation-results} \small Multilingual translation test BLEU results. Results labelled with $\dag$ are significantly better than all other adapters based on pair-wise significance testing \protect\cite{koehn-2004-statistical} with p = 0.01.
    }
\end{table}

\paragraph{Full data multilingual experiments:}
Each language pair direction has 1M training and 100K testing examples. 
In the full data multitask regime, we see that fully finetuned \mbart has an edge over adaptation methods.
Our results demonstrate that almost all PLM adaptation struggle for a 4M combined dataset size. On average, \upps is 1.8\% lower than FFT but also 1.8\% higher than the second highest performing adaptation method. 
Comparing the results of \upps and \utrsfp to \prefixplus, it is clear that conditional prompt generation performs better than unconidional prompt generation.
This is also true when we compare \adapter to \camtl (Figure~\ref{fig:adapters}), indicating that task-conditioned PLM adaptation is better suited at transferring knowledge.
We start noticing that \upps's modularization of knowledge from learnable rules and the control structure has clear benefits over \utrsfp.

\subsection{Does Conditioning Prompt Generators on Data Metadata Improve Performance?}
\label{sec:meta-info}

\begin{table*}[ht!]
    \footnotesize
    \centering
    \setlength\tabcolsep{4.0pt}
    \begin{tabular}[b]{l|c|c|cccc|cccc|cccc}
        \hline
        \multirow{2}{*}{} & \textbf{trained}  & \textbf{prompt} & \multicolumn{3}{c|}{XSum} & \multicolumn{3}{c|}{XSum-OOT} & \multicolumn{4}{c}{Topic-CNN-DM} \\
                          & \textbf{params}   & \textbf{length} & \textbf{R1} & \textbf{R2} & \textbf{RL} & \textbf{MET}  
                                                                & \textbf{R1} & \textbf{R2} & \textbf{RL} & \textbf{MET}       
                                                                & \textbf{R1} & \textbf{R2} & \textbf{RL} & \textbf{MET}      \\
        \rowcolor{gray!20}\multicolumn{13}{c}{\it Full model fine-tuning}                          \\
            \bart$^1$ & 100\% & ---  & 45.14 & 22.27 & 37.25 & 34.03                    
                                    & 39.53  & 16.88 & 31.55 & 28.42                                   
                                    & 39.38  & 22.41 & 35.10 & 29.93        \\
        \rowcolor{gray!20}\multicolumn{13}{c}{\it Adaptation without conditions}                        \\
            \adapter \ding{163} & 14\% & --- & 43.29 & 20.75 & 34.91  & 32.03                                      
                                            & 38.51 & 15.76 & 30.28  & 26.93                                      
                                            & 37.19 & 20.90 & 34.01 & 27.89                                      \\
            \prefix$^2$  & 15\% & 200 & 43.80 & 20.93 & 36.05 & 32.66                                         
                                     & 39.41 & 16.87 & 31.47 & 28.09                                                  
                                     & 38.86 & 21.44 & 34.81 & 28.41                                                  \\
        \rowcolor{gray!20}\multicolumn{13}{c}{\it Adaptation with conditions}                          \\
        
            \prefixplus  & 15\% & 200 & 43.90 & 20.98 & 36.14  & 32.74                                         
                                           & 39.52 & 16.87 & 31.49 & 28.17                                        
                                           & 38.87 & 21.52 & 35.02 & 28.59                                        \\
                                           
            \trsfp                     &  36\% & 150 & \myul[black]{44.39} & \myul[black]{21.41} & \myul[black]{36.26} & \myul[black]{33.29}                                    
                                       & \myul[black]{39.82} & \myul[black]{17.06} & \myul[black]{31.71} & \myul[black]{28.96}                                
                                       & \myul[black]{39.32} & \myul[black]{23.06} & \myul[black]{36.25} & \textbf{32.07} \\
            
            \utrsfp & \textbf{10\%}   & 150   & 44.25 & 21.36 & 36.10 & 33.09 
                                      & 39.53 & 16.95 & 31.49 & 28.28                                             
                                      & 38.93 & 22.02 & 35.38 & 29.77                                             \\
            
            \highlight{\textbf{\upps}}  & \textbf{10\%} & 150 & \textbf{44.67}$^\dag$ & \textbf{21.64}$^\dag$   & \textbf{36.52}$^\dag$  & \textbf{33.64}$^\dag$          
                                        & \textbf{40.03}$^\dag$ & \textbf{17.25}$^\dag$ & \textbf{31.90}$^\dag$ & \textbf{29.07} 
                                        & \textbf{39.94}$^\dag$ & \textbf{23.77}$^\dag$ & \textbf{36.90}$^\dag$ & \myul[black]{32.05}        \\
        \hline
    \end{tabular}
    \caption{ \small
    Summarization results. \ding{163}: Applying method of~\protect\citeauthor{adapter} on \bart. \textbf{best} and \myul[black]{2nd best} results indicated. 
    $^1$XSum results from \protect\citeauthor{bart} and Topic-CNN-DM results from \protect\citeauthor{topic-cnndm}. $^2$XSum and XSum-OOT results from \protect\citeauthor{prefix}.
    Results labelled with $\dag$ are significantly better than all other adapters with p = 0.05.
    MET=Meteor.
    }\label{tab:summarization-results}
\end{table*}

In this section, we assess \upps and our baseline's performance when text conditioning is the news article metadata (third evaluation case).
Our results are in Table~\ref{tab:summarization-results}.
The instructions remain unchanged (i.e., ``Abstractive Summarization'') for each dataset and example. 
However, the metadata such as news outlet, article type or news topic (for Topic-CNN-DM) change for every example.
XSum-OOT is a new dataset based on XSum where the training set consists of specific types of articles (i.e., ``world'', ``politics'', ``business'', ``sports'') and the test set is composed of
the remaining news categories (i.e, ``health'', ``tech'').
XSum-OOT tests if models can generalize to unseen news categories and extrapolate from seen data metadata.

Overall, we notice that \upps, other CPGs and even \prefixplus all perform better with the added information. 
We then observe that \upps consistently surpasses \adapter, \prefixplus and our Conditional Prompt Generation baselines (\trsfp and \utrsfp). 
We note that \upps and \utrsfp adapt 33\% of the layers with 5\% fewer trained parameters, 50 fewer prompt vectors when compared to \prefix.
\textbf{\upps is the only adaptation technique that is consistently on par or outperforms FFT}.
For close to 4 times less parameters, \upps consistently outperforms \trsfp with statistical significance (except on Meteor scores), surprisingly offering a 1.5\% relative improvement over \trsfp  on averaged scores across the three datasets. 
When the task requires more control on the generated output text, we see that \upps rule selection mechanism boosts relative performance by close to 4\% on Topic-CNN-DM \cite{topic-aware} over \utrsfp. 
The improvement in the controlled generation task shows that the separate transformation of conditions help disentangle representations of the topic and other accompanying conditions.

\subsection{Can we Compose Instructions, Metadata and Pre-Learned Support Task Knowledge?}
\label{sec:composition}

In this section, we test the affects of pre-learning specific Conditional Language Generation support tasks along with their instructions. 
At test time, we evaluate our methods on Abstractive Summarization using an instruction that was textually composed from the pre-learned task instructions.
We handpick three support tasks that are related to summarization.
Indeed, summarization can be decomposed into several arguably simpler tasks \cite{pilault-etal-2020-extractive}: 
(1) \textbf{NLU -} understand and interpret a source document, 
(2) \textbf{Extract -} highlight and prioritize important points,
(3) \textbf{Paraphrase -} coherently reformulate key messages.
We describe a technique to induce transfer and composition via text conditioning. 
We vary the \emph{overlap of words} describing a support task and a target task, and measure the effects on target task zero-shot and few-shot performance. 
In this setting, support tasks are trained jointly and the target task is evaluated separately. 
We provide examples of how to create target task descriptions from support task descriptions with certain word overlaps, which is color coded for clarity in Table \ref{tab:task_description}.
Note that we did not search over multiple instructions in the experiments below.
The experimental setup and the support tasks datasets used or created are discussed in Appendix~\ref{app:more_gen_setup}. 

\begin{figure}[ht!]
    \begin{center}
        \includegraphics[width=0.49\textwidth]{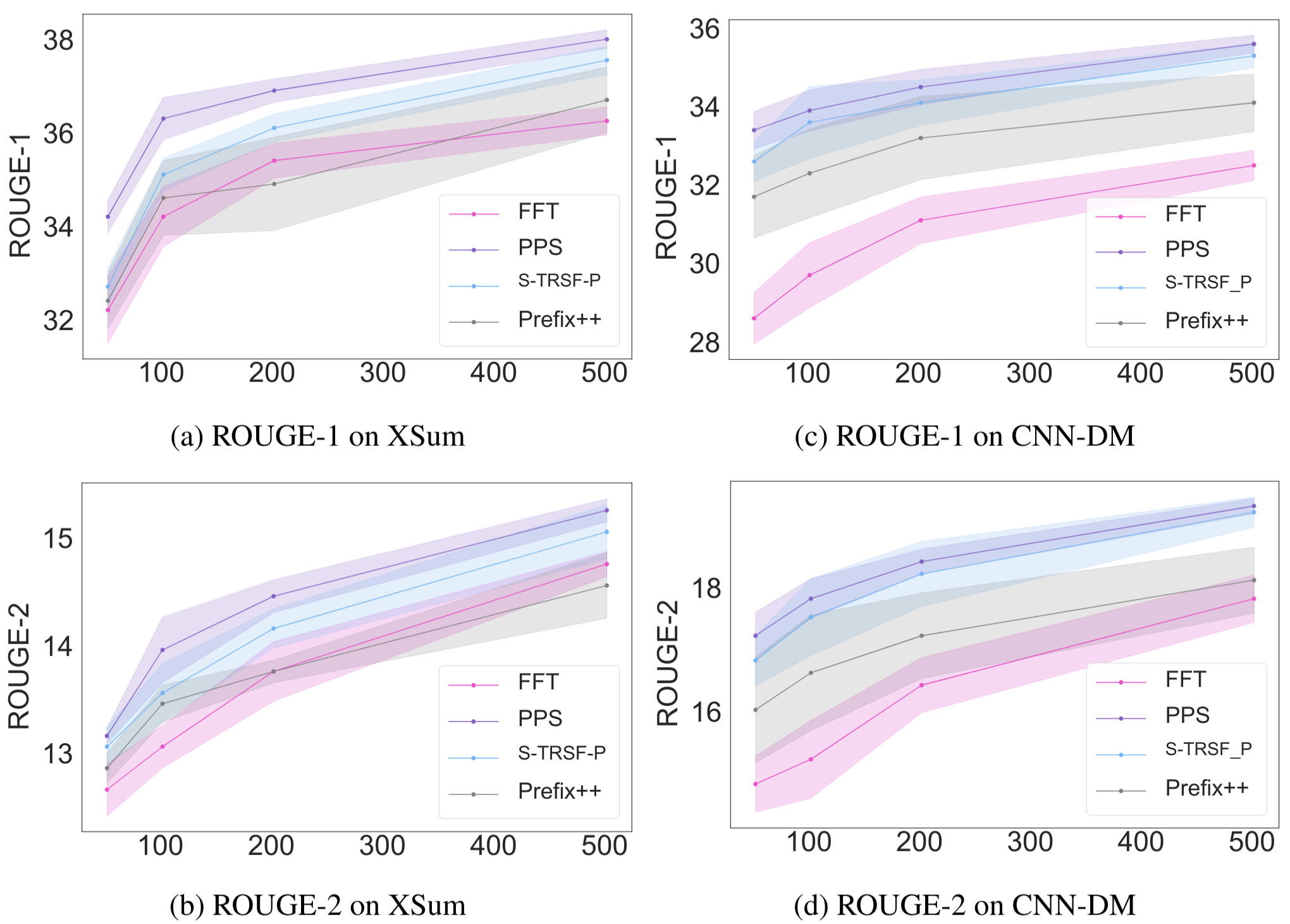}
    \end{center}
    \caption{\small \label{fig:low-rez-transfer} Few-shot transfer. \textbf{PPS = \upps}. Axis show average ROUGE scores vs. number of training examples. Error bands represent min/max of three runs with random seeds.}
\end{figure}

\paragraph{The few-shot case:}
For this experiment, we use the same set-up as in the zero-shot section, except our target task is fine-tuned on a few examples.
The intermediate task prompts used in this section are presented in Table \ref{tab:task_description} in the ``simple'' task instruction column.
We also ran experiments on Topic-CNN-DM to see if our results generalized to other datasets.
Our few-shot summarization results are shown in Figure~\ref{fig:low-rez-transfer}. 
We see that \upps average performance is above that of other \bart based techniques while also maintaining a lower spread between maximum and minimum values. 
This means that \upps better leverages prior task instructions and knowledge to initialize new task prompts. 
Again, for controlable generation (Topic-CNN-DM), \upps displays close to a \textbf{5 and 2 ROUGE-1 point and 3 and 2 ROUGE-2 point improvement} over FFT and \prefixplus respectively.

\paragraph{The zero-shot case:}
We first test if pretrained task and \emph{``Simple task instructions''} in Table~\ref{tab:task_description} are enough to solve our target task without any additional fine-tuning.
CPG modularization approaches such as \upps should improve the composition of task instructions by reusing words and phrases found in support tasks and further enhance \upps's rule reusability and knowledge composition ability.
From our results in the \emph{``Simple''} column of Table \ref{tab:zero-shot}, we first notice that \upps tops \utrsfp, our strongest baseline, by $0.8$ R1 or a 4\% relative improvement.

\begin{table}[ht!]
\footnotesize
\begin{center}
\begin{tabular}{l|c|c|c}
\toprule
\multirow{2}{*}{Method}       & \multicolumn{2}{c|}{Task Instructions} & \multirow{2}{*}{$\Delta$ R} \\
                              & \textbf{Simple} & \textbf{Detailed} &  \\
\midrule
\rowcolor{gray!20} & \it \textbf{R1 / R2} &   \it \textbf{R1 / R2} & \it \textbf{R1 / R2} \\
\bart \tiny{FFT}    & \g{20.9}{1.1} / \textbf{\g{4.0}{0.2}} & \g{21.4}{0.7} / \g{4.2}{0.1} &  \textbf{+0.5} / +0.3  \\
\prefix      & \g{19.0}{0.5} / \g{3.5}{0.1} & \g{19.1}{0.5} / \g{3.5}{0.1} & +0.1 / +0.0  \\
\utrsfp      & \g{21.0}{0.8} / \g{3.9}{0.1} & \g{21.2}{0.5} / \g{4.1}{0.1} & +0.3 / +0.1 \\
\upps        & \textbf{\g{21.8}{0.6}} / \textbf{\g{4.0}{0.1}} & \textbf{\g{22.3}{0.5}} / \textbf{\g{4.4}{0.1}} &  \textbf{+0.5} / \textbf{+0.4} \\

\bottomrule
\end{tabular}
\end{center}
\caption{\small Zero-shot test ROUGE 1 and ROUGE 2 scores on XSum when pretrained with ``simple'' or ``detailed'' task instructions. +/- numbers in grey are the standard deviation (STD).}\label{tab:zero-shot}
\end{table}

\paragraph{Improving task instructions:}
We evaluate if it helps to improve the task instructions content and support-target word overlap.
The detailed target instruction contains word segments from all three support tasks whereas the simple target instruction is composed from only two tasks.
As shown in Table \ref{tab:task_description}, \emph{``Detailed task instructions''} allow us to fully connect each support task with the target task, while providing additional and potentially relevant information. 
Further, the detailed target instruction are lengthier and allow greater overlap between words describing the support tasks as well.

Our \emph{hypothesis} is that greater overlap and lengthier instruction will improve target task performance by better inducing prior knowledge composition.
Task description composition by assembling task descriptions works in conjunction with module composition (see explanation in Appendix \ref{app:mech_composition} and an empirical analysis in Appendix \ref{app:rule_compose}).
Our results suggest that all models benefit from the improved task instruction as observed in the \emph{``Detailed''} column of Table~\ref{tab:zero-shot}.
Specifically, \upps gets the largest \% $\Delta$ gain when moving from ``Simple'' to ``Detailed'' instruction, with a 10\% increase in R2 while maintaining the same standard deviation.

\section{Conclusion}



In this work, we evaluated conditional and composable architectures to conditionally generate differentiable continuous prompts. 
Our new modular prompt generator \upps with sparse rule selection outperforms strong Transformer-based prompt generator baselines. 
\upps excels in the ability to distill knowledge from the conditioned task and example metadata, and often surpasses the fully fine-tuned model, with fewer trainable parameters than other prompt generators. 
In general, \upps performed well on three tasks -- summarization, translation, and semantic parsing, with three different models (\bart, \mbart, \tfive). 
\upps provided even larger improvements in controlled CNLG, where disentangling topic information from other conditions is important.
The improvement is even larger for task which requires more controlled generation. 
\upps showed ability to compose knowledge from bridge tasks to enhance transfer learning and improve unseen tasks in few-shot setting both from a mechanistic and induced standpoint.
Finally, via both theoretical (see Appendix~\ref{sec:analysis}) and empirical analysis, we showed  how rules are sparsely selected and composed, the key ingredients that make \upps more successful and generalizable.

\clearpage
\bibliographystyle{named}
{\small
\bibliography{ijcai23}}

\begin{thebibliography}{}

\bibitem[\protect\citeauthoryear{Ando and Zhang}{2005}]{JMLR:v6:ando05a}
Rie~Kubota Ando and Tong Zhang.
\newblock A framework for learning predictive structures from multiple tasks
  and unlabeled data.
\newblock {\em Journal of Machine Learning Research}, 6(61):1817--1853, 2005.

\bibitem[\protect\citeauthoryear{Bapna and Firat}{2019}]{nmt-adapter}
Ankur Bapna and Orhan Firat.
\newblock Simple, scalable adaptation for neural machine translation.
\newblock In {\em Proceedings of the 2019 Conference on EMNLP and the 9th
  International Joint Conference on Natural Language Processing}, pages
  1538--1548, Hong Kong, China, November 2019. Association for Computational
  Linguistics.

\bibitem[\protect\citeauthoryear{Cheng and
  Lapata}{2016}]{cheng-lapata-2016-neural}
Jianpeng Cheng and Mirella Lapata.
\newblock Neural summarization by extracting sentences and words.
\newblock In {\em Proceedings of the 54th Annual Meeting of the ACL (Volume 1:
  Long Papers)}, pages 484--494, Berlin, Germany, August 2016. Association for
  Computational Linguistics.

\bibitem[\protect\citeauthoryear{Efrat and
  Levy}{2020}]{engineering_instructions}
Avia Efrat and Omer Levy.
\newblock The turking test: Can language models understand instructions?
\newblock {\em CoRR}, abs/2010.11982, 2020.

\bibitem[\protect\citeauthoryear{Furrer \bgroup \em et al.\egroup
  }{2020}]{Furrer2020CompositionalGI}
Daniel Furrer, Marc van Zee, Nathan Scales, and Nathanael Scharli.
\newblock Compositional generalization in semantic parsing: Pre-training vs.
  specialized architectures.
\newblock {\em ArXiv}, abs/2007.08970, 2020.

\bibitem[\protect\citeauthoryear{Goyal and
  Bengio}{2021}]{goyal2021neuralprodsys}
Anirudh Goyal and Yoshua Bengio.
\newblock Neural production systems.
\newblock In {\em Advances in Neural Information Processing Systems}, 2021.

\bibitem[\protect\citeauthoryear{Goyal \bgroup \em et al.\egroup
  }{2021}]{goyal2021recurrent}
Anirudh Goyal, A.~Lamb, J.~Hoffmann, Shagun Sodhani, S.~Levine, Yoshua Bengio,
  and Bernhard Sch{\"o}lkopf.
\newblock Recurrent independent mechanisms.
\newblock In {\em International Conference on Learning Representations}, 2021.

\bibitem[\protect\citeauthoryear{Guo \bgroup \em et al.\egroup
  }{2021}]{guo2020parameter}
Demi Guo, Alexander~M Rush, and Yoon Kim.
\newblock Parameter-efficient transfer learning with diff pruning.
\newblock In {\em Proceedings of ACL}, 2021.

\bibitem[\protect\citeauthoryear{Han \bgroup \em et al.\egroup
  }{2021}]{prompt_tuning_rule_2021}
Xu~Han, Weilin Zhao, Ning Ding, Zhiyuan Liu, and Maosong Sun.
\newblock {PTR:} prompt tuning with rules for text classification.
\newblock {\em CoRR}, abs/2105.11259, 2021.

\bibitem[\protect\citeauthoryear{Houlsby \bgroup \em et al.\egroup
  }{2019}]{pmlr-v97-houlsby19}
Neil Houlsby, Andrei Giurgiu, Stanislaw Jastrzebski, Bruna Morrone, Quentin
  De~Laroussilhe, Andrea Gesmundo, Mona Attariyan, and Sylvain Gelly.
\newblock Parameter-efficient transfer learning for {NLP}.
\newblock In Kamalika Chaudhuri and Ruslan Salakhutdinov, editors, {\em
  Proceedings of the 36th International Conference on Machine Learning},
  volume~97 of {\em Proceedings of Machine Learning Research}, pages
  2790--2799. PMLR, 09--15 Jun 2019.

\bibitem[\protect\citeauthoryear{Hu \bgroup \em et al.\egroup
  }{2021}]{hu2021lora}
Edward~J Hu, Yelong Shen, Phillip Wallis, Zeyuan Allen-Zhu, Yuanzhi Li, Shean
  Wang, and Weizhu Chen.
\newblock Lo{RA}: Low-rank adaptation of large language models.
\newblock {\em arXiv preprint arXiv:2106.09685}, 2021.

\bibitem[\protect\citeauthoryear{Jiang \bgroup \em et al.\egroup
  }{2020}]{prompt-sensitivity}
Zhengbao Jiang, Frank~F. Xu, Jun Araki, and Graham Neubig.
\newblock How can we know what language models know?
\newblock {\em Transactions of the Association for Computational Linguistics
  (ACL)}, 8:423--438, 2020.

\bibitem[\protect\citeauthoryear{Keysers \bgroup \em et al.\egroup
  }{2020}]{keysers2020measuring}
Daniel Keysers, Nathanael Sch{\"a}rli, Nathan Scales, Hylke Buisman, Daniel
  Furrer, Sergii Kashubin, Nikola Momchev, Danila Sinopalnikov, Lukasz
  Stafiniak, Tibor Tihon, Dmitry Tsarkov, Xiao Wang, Marc van Zee, and Olivier
  Bousquet.
\newblock Measuring compositional generalization: A comprehensive method on
  realistic data.
\newblock In {\em International Conference on Learning Representations}, 2020.

\bibitem[\protect\citeauthoryear{Kirsch \bgroup \em et al.\egroup
  }{2018}]{NEURIPS2018_310ce61c}
Louis Kirsch, Julius Kunze, and David Barber.
\newblock Modular networks: Learning to decompose neural computation.
\newblock In S.~Bengio, H.~Wallach, H.~Larochelle, K.~Grauman, N.~Cesa-Bianchi,
  and R.~Garnett, editors, {\em Advances in Neural Information Processing
  Systems}, volume~31. Curran Associates, Inc., 2018.

\bibitem[\protect\citeauthoryear{Koehn}{2004}]{koehn-2004-statistical}
Philipp Koehn.
\newblock Statistical significance tests for machine translation evaluation.
\newblock In {\em Proceedings of the 2004 Conference on Empirical Methods in
  Natural Language Processing}, pages 388--395, Barcelona, Spain, July 2004.
  Association for Computational Linguistics.

\bibitem[\protect\citeauthoryear{Koehn}{2005}]{europarl}
Philipp Koehn.
\newblock {E}uroparl: A parallel corpus for statistical machine translation.
\newblock In {\em Proceedings of Machine Translation Summit X: Papers}, pages
  79--86, Phuket, Thailand, September 13-15 2005.

\bibitem[\protect\citeauthoryear{Lake and Baroni}{2018}]{scan}
Brenden~M. Lake and Marco Baroni.
\newblock Generalization without systematicity: On the compositional skills of
  sequence-to-sequence recurrent networks.
\newblock In {\em ICML}, 2018.

\bibitem[\protect\citeauthoryear{Lester \bgroup \em et al.\egroup
  }{2021}]{lester2021power}
Brian Lester, Rami Al-Rfou, and Noah Constant.
\newblock The power of scale for parameter-efficient prompt tuning.
\newblock In {\em Proceedings of the 2021 Conference on Empirical Methods in
  Natural Language Processing}, pages 3045--3059, Online and Punta Cana,
  Dominican Republic, November 2021. Association for Computational Linguistics.

\bibitem[\protect\citeauthoryear{Lewis \bgroup \em et al.\egroup }{2020}]{bart}
Mike Lewis, Yinhan Liu, Naman Goyal, Marjan Ghazvininejad, Abdelrahman Mohamed,
  Omer Levy, Veselin Stoyanov, and Luke Zettlemoyer.
\newblock {BART}: Denoising sequence-to-sequence pre-training for natural
  language generation, translation, and comprehension.
\newblock In {\em Proceedings of the 58th Annual Meeting of the ACL}, pages
  7871--7880, Online, July 2020. Association for Computational Linguistics.

\bibitem[\protect\citeauthoryear{Li and Liang}{2021}]{prefix}
Xiang~Lisa Li and Percy Liang.
\newblock Prefix-tuning: Optimizing continuous prompts for generation.
\newblock In {\em Proceedings of the 59th Annual Meeting of the ACL and the
  11th International Joint Conference on Natural Language Processing (Volume 1:
  Long Papers)}, pages 4582--4597, Online, August 2021. Association for
  Computational Linguistics.

\bibitem[\protect\citeauthoryear{Li \bgroup \em et al.\egroup
  }{2022}]{prompt_transfer_nlg_2022}
Junyi Li, Tianyi Tang, Jian-Yun Nie, Ji-Rong Wen, and Xin Zhao.
\newblock Learning to transfer prompts for text generation.
\newblock In {\em Proceedings of the 2022 Conference of the North American
  Chapter of the Association for Computational Linguistics: Human Language
  Technologies}, pages 3506--3518, Seattle, United States, July 2022.
  Association for Computational Linguistics.

\bibitem[\protect\citeauthoryear{Lin \bgroup \em et al.\egroup
  }{2020}]{adapter}
Zhaojiang Lin, Andrea Madotto, and Pascale Fung.
\newblock Exploring versatile generative language model via parameter-efficient
  transfer learning.
\newblock In {\em Findings of EMNLP 2020}, pages 441--459, Online, November
  2020. Association for Computational Linguistics.

\bibitem[\protect\citeauthoryear{Liu \bgroup \em et al.\egroup }{2020}]{mbart}
Yinhan Liu, Jiatao Gu, Naman Goyal, Xian Li, Sergey Edunov, Marjan
  Ghazvininejad, Mike Lewis, and Luke Zettlemoyer.
\newblock Multilingual denoising pre-training for neural machine translation.
\newblock {\em Transactions of the Association for Computational Linguistics},
  8:726--742, 2020.

\bibitem[\protect\citeauthoryear{Liu \bgroup \em et al.\egroup
  }{2021}]{p-tuning}
Xiao Liu, Yanan Zheng, Zhengxiao Du, Ming Ding, Yujie Qian, Zhilin Yang, and
  Jie Tang.
\newblock {GPT} understands, too.
\newblock {\em CoRR}, abs/2103.10385, 2021.

\bibitem[\protect\citeauthoryear{Mishra \bgroup \em et al.\egroup
  }{2021}]{nat_instruction_qa}
Swaroop Mishra, Daniel Khashabi, Chitta Baral, and Hannaneh Hajishirzi.
\newblock Natural instructions: Benchmarking generalization to new tasks from
  natural language instructions.
\newblock {\em CoRR}, abs/2104.08773, 2021.

\bibitem[\protect\citeauthoryear{Mrini \bgroup \em et al.\egroup
  }{2021a}]{topic-aware}
Khalil Mrini, Can Liu, and Markus Dreyer.
\newblock Rewards with negative examples for reinforced topic-focused
  abstractive summarization.
\newblock In {\em Proceedings of the Third Workshop on New Frontiers in
  Summarization}, pages 33--38, Online and in Dominican Republic, November
  2021. Association for Computational Linguistics.

\bibitem[\protect\citeauthoryear{Mrini \bgroup \em et al.\egroup
  }{2021b}]{topic-cnndm}
Khalil Mrini, Can Liu, and Markus Dreyer.
\newblock Rewards with negative examples for reinforced topic-focused
  abstractive summarization.
\newblock In {\em Proceedings of the Third Workshop on New Frontiers in
  Summarization}, pages 33--38, Online and in Dominican Republic, November
  2021. Association for Computational Linguistics.

\bibitem[\protect\citeauthoryear{Narayan \bgroup \em et al.\egroup
  }{2018}]{xsum}
Shashi Narayan, Shay~B. Cohen, and Mirella Lapata.
\newblock Don't give me the details, just the summary! {T}opic-aware
  convolutional neural networks for extreme summarization.
\newblock In {\em Proceedings of EMNLP}, 2018.

\bibitem[\protect\citeauthoryear{Newell}{1972}]{newell_problem_solving}
Allen. Newell.
\newblock {\em Human problem solving}.
\newblock Prentice-Hall, Englewood Cliffs, N. J, 1972.

\bibitem[\protect\citeauthoryear{Nilsson}{2010}]{ai_quest}
Nils~J. Nilsson.
\newblock {\em The Quest for Artificial Intelligence: A History of Ideas and
  Achievements}.
\newblock Cambridge University Press, Cambridge, UK, 2010.

\bibitem[\protect\citeauthoryear{Novikova \bgroup \em et al.\egroup
  }{2017}]{novikova-etal-2017-e2e}
Jekaterina Novikova, Ond{\v{r}}ej Du{\v{s}}ek, and Verena Rieser.
\newblock The {E}2{E} dataset: New challenges for end-to-end generation.
\newblock In {\em Proceedings of the 18th Annual {SIG}dial Meeting on Discourse
  and Dialogue}, pages 201--206, Saarbr{\"u}cken, Germany, August 2017.

\bibitem[\protect\citeauthoryear{Ostapenko and
  Charlin}{2021}]{ostapenko2021continual}
Oleksiy Ostapenko and Laurent Charlin.
\newblock Continual learning via local module composition.
\newblock In {\em Advances in Neural Information Processing Systems}, 2021.

\bibitem[\protect\citeauthoryear{Petroni \bgroup \em et al.\egroup
  }{2019}]{plms_as_kg}
Fabio Petroni, Tim Rockt{\"a}schel, Sebastian Riedel, Patrick Lewis, Anton
  Bakhtin, Yuxiang Wu, and Alexander Miller.
\newblock Language models as knowledge bases?
\newblock In {\em Proceedings of the 2019 Conference on EMNLP and the 9th
  International Joint Conference on Natural Language Processing
  (EMNLP-IJCNLP)}, pages 2463--2473, Hong Kong, China, November 2019.
  Association for Computational Linguistics.

\bibitem[\protect\citeauthoryear{Pilault \bgroup \em et al.\egroup
  }{2020}]{pilault-etal-2020-extractive}
Jonathan Pilault, Raymond Li, Sandeep Subramanian, and Chris Pal.
\newblock On extractive and abstractive neural document summarization with
  transformer language models.
\newblock In {\em Proceedings of the 2020 Conference on EMNLP}, pages
  9308--9319, Online, November 2020. Association for Computational Linguistics.

\bibitem[\protect\citeauthoryear{Pilault \bgroup \em et al.\egroup
  }{2021}]{pilault2021conditionally}
Jonathan Pilault, Amine~El hattami, and Christopher Pal.
\newblock Conditionally adaptive multi-task learning: Improving transfer
  learning in nlp using fewer parameters {\&} less data.
\newblock In {\em International Conference on Learning Representations}, 2021.

\bibitem[\protect\citeauthoryear{Ponti \bgroup \em et al.\egroup
  }{2023}]{combine_skills_2022}
Edoardo~Maria Ponti, Alessandro Sordoni, Yoshua Bengio, and Siva Reddy.
\newblock Combining parameter-efficient modules for task-level generalisation.
\newblock In {\em Proceedings of the 17th Conference of the European Chapter of
  the Association for Computational Linguistics}, pages 687--702, Dubrovnik,
  Croatia, May 2023. Association for Computational Linguistics.

\bibitem[\protect\citeauthoryear{Raffel \bgroup \em et al.\egroup }{2020}]{t5}
Colin Raffel, Noam Shazeer, Adam Roberts, Katherine Lee, Sharan Narang, Michael
  Matena, Yanqi Zhou, Wei Li, and Peter~J. Liu.
\newblock Exploring the limits of transfer learning with a unified text-to-text
  transformer.
\newblock {\em Journal of Machine Learning Research}, 21(140):1--67, 2020.

\bibitem[\protect\citeauthoryear{Schick and
  Sch{\"u}tze}{2021}]{schick-schutze-2021-exploiting}
Timo Schick and Hinrich Sch{\"u}tze.
\newblock Exploiting cloze-questions for few-shot text classification and
  natural language inference.
\newblock In {\em Proceedings of the 16th Conference of the European Chapter of
  the ACL: Main Volume}, pages 255--269, Online, April 2021. Association for
  Computational Linguistics.

\bibitem[\protect\citeauthoryear{Touretzky and Hinton}{1988}]{TOURETZKY1988423}
David~S. Touretzky and Geoffrey~E. Hinton.
\newblock A distributed connectionist production system.
\newblock {\em Cognitive Science}, 12(3):423--466, 1988.

\bibitem[\protect\citeauthoryear{Trischler \bgroup \em et al.\egroup
  }{2017}]{newsqa}
Adam Trischler, Tong Wang, Xingdi Yuan, Justin Harris, Alessandro Sordoni,
  Philip Bachman, and Kaheer Suleman.
\newblock {N}ews{QA}: A machine comprehension dataset.
\newblock In {\em Proceedings of the 2nd Workshop on Representation Learning
  for {NLP}}, pages 191--200, Vancouver, Canada, August 2017. Association for
  Computational Linguistics.

\bibitem[\protect\citeauthoryear{Tsimpoukelli \bgroup \em et al.\egroup
  }{2021}]{multimodal-prompt}
Maria Tsimpoukelli, Jacob Menick, Serkan Cabi, S.~M.~Ali Eslami, Oriol Vinyals,
  and Felix Hill.
\newblock Multimodal few-shot learning with frozen language models.
\newblock {\em CoRR}, abs/2106.13884, 2021.

\bibitem[\protect\citeauthoryear{Vaswani \bgroup \em et al.\egroup
  }{2017}]{vaswani2017_attn}
Ashish Vaswani, Noam Shazeer, Niki Parmar, Jakob Uszkoreit, Llion Jones,
  Aidan~N Gomez, \L~ukasz Kaiser, and Illia Polosukhin.
\newblock Attention is all you need.
\newblock In I.~Guyon, U.~V. Luxburg, S.~Bengio, H.~Wallach, R.~Fergus,
  S.~Vishwanathan, and R.~Garnett, editors, {\em Advances in Neural Information
  Processing Systems}, volume~30. Curran Associates, Inc., 2017.

\bibitem[\protect\citeauthoryear{Wagner}{2020}]{wagner2020first}
C.G. Wagner.
\newblock {\em A First Course in Enumerative Combinatorics}.
\newblock Pure and Applied Undergraduate Texts. American Mathematical Society,
  2020.

\bibitem[\protect\citeauthoryear{Wang \bgroup \em et al.\egroup }{2018}]{glue}
Alex Wang, Amanpreet Singh, Julian Michael, Felix Hill, Omer Levy, and Samuel
  Bowman.
\newblock {GLUE}: A multi-task benchmark and analysis platform for natural
  language understanding.
\newblock In {\em Proceedings of the 2018 {EMNLP} Workshop {B}lackbox{NLP}:
  Analyzing and Interpreting Neural Networks for {NLP}}, pages 353--355,
  Brussels, Belgium, November 2018. Association for Computational Linguistics.

\bibitem[\protect\citeauthoryear{Wu \bgroup \em et al.\egroup
  }{2020}]{attentive-pooling}
Chuhan Wu, Fangzhao Wu, Tao Qi, Xiaohui Cui, and Yongfeng Huang.
\newblock Attentive pooling with learnable norms for text representation.
\newblock In {\em Proceedings of the 58th Annual Meeting of the ACL}, pages
  2961--2970, Online, July 2020. Association for Computational Linguistics.

\end{thebibliography}

\clearpage
\appendix

\section*{Clarifications and Recap}

Before going through the appendix material, we will first do a small recap of \upps.
As mentioned earlier in the article, \upps is a mechanism that learns to transform task instructions or input metadata into continuous prompts by conditionally composing representations from a differentiable set of sparsely selected modules (rules).
The model is useful for transferring the learning and can reconstruct prior prompt knowledge using rules that helps surpass baselines in several categories and uses less trained parameters.

\paragraph{What are differentiable rules?} 
We use the definition from \cite{goyal2021neuralprodsys} that rules, ``carried
over from the traditional symbolic system, operate on inputs that are bound by conditions. In the deep learning implementation, a production rule is represented by a distinct MLP'' but we use instead an attention head (see Section~\ref{sec:app_conditions}). 
$[\mathbf{R}]$ embeds the position of $N$ rules and $[\mathbf{M}]=[\mathbf{E}]\cdot[\mathbf{R}]^T$ maps a condition to a rule. 
With top-$k$ on $[\mathbf{M}]$, different conditions activate different rules, e.g.: for $k=2$ and 8 different rules to choose from for example, \emph{IF} ``summarization'', \emph{THEN} $\mathbf{I}_1$ from rules 1 and 2; \emph{IF} ``bbc'', \emph{THEN} $\mathbf{I}_2$ from rules 4 and 7.

\paragraph{How are rules chosen?}
The embedding matrices in Figure~\ref{subfig:pps_layer} are the Modus Ponens inference rules:
\begin{enumerate}
\item conditions are embedded giving $[\mathbf{E}]$; 
\item multiplied by a Rule matrix $[\mathbf{R}]$ which gives the Rule-Condition matrix $[\mathbf{M}]$; 
\item a Gumbel top-$k$ on $[\mathbf{M}]$ selects $k$ rules (attention heads); 
\item the selected $k$ rules transform the input to give instructions $\mathbf{I}$; 
\item instructions representation $\mathbf{I}$ is used as a differentiable prompt in the LLM.
\end{enumerate}

Our \trsfp baseline selects all rules since it uses all heads. 
We find that sparsely choosing rules with \upps gives better results.
$H$ represents all attn heads for the \trsfp conditional prompt generator while $\hat{H}_i$ represents one of selected attn heads in \upps.

Our code is publicly available in the supplementary material which will be also be made available on GitHub.

\section*{Appendix}

The appendix contains important information about \upps and its baselines. 
Sections \ref{sec:app_conditions}, \ref{sec:connect_pps_ps} and \ref{app:mech_composition} are a direct extension of the Methodology Section \ref{sec:method} and provide additional details on conditions, the link between \upps and ProdSys and module composition. 
Further, due to page limitations, we have moved all of our analysis of \upps to Section \ref{sec:analysis} which provides valuable insights into: 
(1) \upps theoretical generalization bounds  (Section \ref{app:theory}); 
(2) ablations on the number of top $k$ rules chosen amongst $N$ possible options (Section \ref{app:k_N_ablation});
(3) module reusability statistics when introducing a new task (Section \ref{app:rule_compose}); 
(4) results assessing the affects of bridge tasks in task composition (Section \ref{app:bridge_compose}); 
(5) findings that show that \upps can sparsely associate a single rule to a single task.
Moreover, we provide in Sections \ref{app:more_gen_setup}, \ref{sec:datasets} and \ref{app:baselines} more information on our general experimental set-up, our datasets and our baselines. 
Finally, in Section \ref{sec:more_theory}, we present our assumptions and proofs. 

\section{Additional details on conditions and rules}
\label{sec:app_conditions}

\begin{algorithm}[th]

    \begin{algorithmic}[1]
    \footnotesize
    
    \item[]
    
    \State{\bfseries \itshape Input:} Conditioning text word embeddings $\mathcal{S}_C = \langle \vc_t | t \in \{1, \ldots,  T_C\}\rangle_C$ for each condition $C \in \mathcal{C}$, rule embedding $\Vec{\vR_i}$, $n$ \upps attention heads $\widetilde{\mathrm{H}}_{1 \ldots n}$ corresponding to each rule $\vR_{1 \ldots n}$ and top $k$.
    
    \For{each $C \in \mathcal{C}$}
        \State{{\bfseries \itshape Step 1:} \itshape Get the fixed-sized condition representation using Condition Encoder $f(\cdot)$}: $\Vec{\vC} \gets f(\mathcal{S}_C)$
        \vspace{1mm}
        
        \State{{\bfseries \itshape Step 2:} \itshape Select \{top-k rule, condition\} pairs, for $\epsilon \sim \mathrm{Gumbel} (0,1) \text{, } \overline{\vW}^q \text{ learnable weight}$}
        \State \bull $\begin{bmatrix} \vR \end{bmatrix} \gets (\Vec{\vR}_{1} \ldots \Vec{\vR}_{n})$ \text{ and } $\vq \gets \Vec{\vC} \overline{\vW}^q$
        \State \bull $\{r\} = \{r_1, \ldots, r_k\} = \text{top-k} \left(q\begin{bmatrix} \vR \end{bmatrix} + \epsilon \right) \quad \qquad \qquad \qquad
        \refstepcounter{equation}(\theequation)\label{eq:rules}$
        \vspace{1mm}
        
        \State{{\bfseries \itshape Step 3:} \itshape Select condition context, for $\epsilon \sim \mathrm{Gumbel} (0,1)  \text{, } \widetilde{\vW}^{k} \text{ and } \widetilde{\vW}^{q} \text{ learnable weights}$}
        \State \bull $\begin{bmatrix} \vQ \end{bmatrix} = (\Vec{\vR}_{r_{1}}, \ldots, \Vec{\vR}_{r_{k}})\widetilde{\vW}^q $ \text{ and } $k =\Vec{\vC} \widetilde{\vW}^k$
        \State \bull $ j = \text{argmax} \left(\begin{bmatrix} \vQ \end{bmatrix} k + \epsilon\right), j\in\{1,\ldots,|\mathcal{C}|\}\ \qquad \qquad \qquad
        \refstepcounter{equation}(\theequation)\label{eq:context}$
        \State \bull Let $C'$ be the $j$th condition $\in \mathcal{C}$ and $\mathcal{S}_{C'} = \langle {\vc'}_t | t \in \{1, \ldots,  T_{C'}\}\rangle_{C'}$
        \State{{\bfseries \itshape Step 4:} \itshape Apply selected rules to conditioning text $C$ and its context $C'$, $\text{ for learnable weight } \vW^o$}
        \State \bull $\text{Prompt} = \vW^o\sum_{r \in \{r\}} \widetilde{\mathrm{H}}_{r}(\text{LN}(\text{Concat}(\mathcal{S}_{C}, \mathcal{S}_{C'})))$%

    \EndFor
\end{algorithmic}

\caption{Prompt Production System (\upps)}
    \label{alg:upps}
\end{algorithm}


Table \ref{tab:conditions} is an overview of various conditioning text for each condition used across experiments. 
Information may differ from one task to the other depending on data availability and relevancy to the task. 
Note that max $T_C$ represents the maximum allowable number of tokens after subword tokenization. 
Sequences longer than $T_C$ are clipped and sequences smaller than $T_C$ are padded. 
The inferred news labels are noisy since the labeling model was trained on a different news dataset.
Below is an example of the summarization and semantic parsing conditions covered in Section \ref{sec:main_results}:

\begin{table}[h]

\setlength\tabcolsep{4pt}
\begin{center}
\begin{footnotesize}

\begin{tabular}{llcccrr}
\toprule
Conditions     & CPG Input & \multicolumn{3}{c}{Used in task} & max   & \multirow{2}{*}{from} \\
$\in \mathcal{C}$  &  example      & \textbf{sum} & \textbf{nmt} & \textbf{sp} & $T_C$ & \\
\midrule
\rowcolor{cyan!15}\multicolumn{7}{c}{\it \textbf{Task-specific}} \\
Instructions & ``Translate en to fr'' & $\surd$ & $\surd$ & $\surd$   & 50 & H \\
\rowcolor{cyan!15}\multicolumn{7}{c}{\it \textbf{Example type}}  \\
News outlet & ``cnn'', ``bbc''          & $\surd$ & $\times$ & $\times$ & 5 & D \\
Direction   & ``opposite''          & $\times$ & $\times$ & $\surd$ & 5 & D \\
Conjunction & ``and'', ``after''        & $\times$ & $\times$ & $\surd$ & 5 & D \\
\rowcolor{cyan!15}\multicolumn{7}{c}{\it \textbf{Example-specific}} \\
Input text  & input text            & $\times$ & $\surd$  & $\surd$  & 514 & D \\
News label  & ``sports'', ``politics''  & $\surd$ & $\times$ & $\times$ & 5 & M \\
\bottomrule
\end{tabular}
\end{footnotesize}
\end{center}
\caption{\small Conditioning text inputted to the prompt generator. \textbf{sum} = summarization, \textbf{nmt} = translation and \textbf{sp} = semantic parsing.}
\label{tab:conditions}

\end{table}

This section provides additional details to complement Section \ref{sec:cpg}. We have written an example below on how text conditioning was formulated for the semantic parsing task. Please note that, for semantic parsing, no external information was used. All conditions come from the SCAN dataset.

\begin{example} \label{ex:ex_scan} Consider semantic parsing, where a sequence of actions is generated given an input command. 
For instance, the command ``run thrice and walk opposite left'' must produce actions ``I\_RUN I\_RUN I\_RUN I\_TURN\_RIGHT I\_WALK''. 
A human provides instructions on how to perform the task: ``generate actions'' (condition 1). 
We notice that example-specific words such as ``opposite'' (condition 2) or conjunctions such as ``and'' or ``after'' (condition 3) are important to the choice and order of generated actions.
\end{example}
The following is an example for news summarization
\begin{example} \label{ex:ex_summ} Consider news summarization, where an article is compressed into a shorter text. 
A human provides instructions to perform the task: ``summarize abstractively'' (condition 1). 
We notice that target summaries are slightly different if the news outlet is ``CNN'' or ``Daily Mail'' (condition 2) or if the article covers ``sports'', ``business'', ``politics'' (condition 3). Conditions 2 and 3 may improve summaries by providing useful insights on example types.
\end{example}

\begin{table}[ht!]
\setlength\tabcolsep{3.5pt}
\begin{center}
\begin{footnotesize}
\begin{tabular}{l|l|l}
\hline
                  & \textbf{Simple task instructions} & \textbf{Detailed task instructions}   \\
\rowcolor{cyan!15}&\multicolumn{2}{c}{\it \textbf{Support Tasks}} \\

$^1$     & Entity Answering   & \textcolor{orange}{Find the most common entities}          \\
$^2$    & Extractive \textcolor{orange}{Summarization}   & \textcolor{purple}{Find the most relevant sentence}    \\
$^3$     & \textcolor{blue}{Abstractive} Parapharase  & \textcolor{blue}{Abstractive rephrase sentences}   \\
\rowcolor{cyan!15}&\multicolumn{2}{c}{\it \textbf{Target Task}}      \\
$^4$     & \textcolor{blue}{Abstractive}    & \textcolor{purple}{Fi}\textcolor{orange}{nd} the \textcolor{purple}{most relevant sentences}           \\
 & \textcolor{orange}{Summarization}  &  and \textcolor{orange}{most common entities}  then        \\
            &                  &  \textcolor{blue}{abstractively rewrite sentences}        \\
\hline
\end{tabular}
\end{footnotesize}
\caption{ \small Composing target tasks instructions from support tasks (simple or detailed) for XSum. Colors show word overlap.}\label{tab:task_description}
\end{center}
\end{table}

The news labels are extracted by applying a news category classifier. There are seven categories including entertainment, sports, business, politics, health, science and other. 
The classifier is based on a pretrained BERT 
model.
The hidden states are passed through an LSTM with attention. 
The pretrained model, code and news classification dataset that we have used can be found at \url{https://github.com/rootally/News-Category-Classification-with-BERT.git}.

\upps use differentiable rules similar to NPS. The same definition of differentiable rules applies to us. 
\citeauthor{goyal2021neuralprodsys} defines it as: 
“We describe an end-to-end deep learning model that constructs representations of entities, and then operates on these entities with differentiable—and thus learnable—production rules. 
The essence of these rules, carried over from the traditional symbolic system, is that they operate on variables that are bound, or linked, to entities. 
In the deep learning implementation, each production rule is represented by a distinct MLP with query-key attention mechanisms to specify the rule-entity binding and to determine when the rule should be triggered for a given entity. 
We are not the first to propose a neural instantiation of a production system architecture. 
\citeauthor{TOURETZKY1988423} gave a proof of principle that neural net hardware could be hardwired to implement a production system for symbolic reasoning.”
For NPS, a differentiable rule is an MLP. 
For \upps, a differentiable rule is one of multiheads in the attention mechanism.

\section{More details connecting \upps to ProdSys}
\label{sec:connect_pps_ps}

We can now more formally make the connection between the \upps prompt generator and ProdSys that we sketched out in Figure~\ref{fig:method}. 
Algorithm \ref{alg:upps} shows that prompts are generated from a control mechanism that maps a condition $C_j$ to a rule $R_i$ in the form $C_j \xrightarrow[]{} R_i$. 
The rules are sparsely selected depending on the condition (see Equation \ref{eq:rules}). 
From Equation \ref{eq:cond_p}, we notice that generated prompts are a function of condition embeddings which contains task/example knowledge (working memory). 
Finally, we observe from Equations \ref{eq:prompt-adapter} and \ref{eq:gate} that the prompt-based adaptation to PLM head $H$ is both a function of $\vP(\vc_t)$ and $\vQ, \vK, \vV$, which contain pretraining knowledge (long-term memory) in the frozen weights.

Since we use differentiable rules it is not necessary to predetermine human-engineered rules as in ProdSys.
However, this strength also makes \upps less interpretable than ProdSys.
Nonetheless, explainability is an issue with most deep neural networks. 
We however believe that \upps is more interpretable than typical monolithic deep architectures. 
For example, in Figure~\ref{fig:rule_usage}, our model can associate trained tasks with untrained tasks via rule selection. 
Further, since our prompt generator uses attention, we can associate specific words in the condition to specific rules.

\section{Module Composition}
\label{app:mech_composition}

Several studies 
have shown the powerful transfer and generalization abilities of modular neural networks over their monolithic counterparts. 
Typically, model modularization involves the selection of distinct subsets (modules) of a network and a mechanism for sparsely combining the subnetworks. 
In contrast to monolithic structures, only a portion of layers are dynamically activated during training or inference, and each module $r \in \{r\}$ is meant to perform a distinct function $g$ parametrized by weights $\theta_r$. 
In \upps, each set of rules represents separate modules. We rewrite Equation \ref{eq:cond_p} in a simpler form:
\begin{align}
    \vP_t(\vc_t) &= \sum_{r \in \{r\}} w_r \cdot g(\vc_t, \theta_r), \label{eq:compose} \\
    \text{where } g(\vc_t,\theta_r) &= \widetilde{\mathrm{H}}_{r}(\text{LN}(\vc_t))
\end{align}
and where $\theta_r$ are the parameters of $\widetilde{\mathrm{H}}_{r}$ and $w_r=\vW^o$ weights the contribution of each module. 
Apart from the sparse module selection (i.e., $r \in \{r\}$), Equation \ref{eq:compose}, describing \upps module composition, looks very similar to the equations describing other module composition mechanisms \cite{ostapenko2021continual}.

\section{Analysis}
\label{sec:analysis}

\upps is dependent on a few variables: the number of rules in the architecture, the top-k sparsely selected rules, and the informativeness of task instructions given. In this section, we attempt to explain the impact of such variables.

\subsection{Theory: How do $k$ and $N$ affect generalization?}
\label{app:theory}
Consider $\mathcal{T}$ tasks learnt jointly over an input space $\mathcal{X}$ using a function $p: \mathcal{X}\rightarrow \mathbb{R}$. 
For a multi-task representation learning problem, the predictor function can be factorized such that $p=\{f^{\tau} \circ h\}^\mathcal{T}_{\tau=1}$, the composition of a representation function $h\in \mathcal{H}$ used across tasks and the set of task-specific prediction functions $f\in \mathcal{F}$. The representation hypothesis class is defined as $\mathcal{H} = \{h: \mathcal{X}\rightarrow \mathbb{R}^d\}$, where $d$ is the dimension of the representation space. Similarly, the task prediction hypothesis class is defined as $\mathcal{F} = \{\{f^{\tau}: \mathbb{R}^d\rightarrow \mathbb{R}\}^{\mathcal{T}}_{\tau=1}\}$. We consider the multi-task representation learning problem where we minimize the task-averaged empirical risk:

\begin{equation}
\label{eq:mtl}
    \underset{h\in \mathcal{H},f\in \mathcal{F} }{\min}\frac{1}{\mathcal{T}}\sum\limits_{\tau=1}^{\mathcal{T}} l^{\tau}(f^{\tau} \circ h).
\end{equation}

To simplify the problem, we take the loss function for task $\tau$ to map $l^{\tau}: \mathbb{R} \times \mathbb{R} \rightarrow [0,1]$. $l^{\tau}$ is assumed $1$-Lipschitz. If input and output samples are drawn from a data distribution $\{x^{\tau},y^{\tau}\} \sim \mathcal{D}_{\tau}$, the expected risk for task ${\tau}$ is then written as $\mathcal{L}_{\mathcal{D}_{\tau}}(f^{\tau} \circ h)=\mathbb{E}_{\{x^{\tau},y^{\tau}\} \sim \mathcal{D}_{\tau}} \big[ l^{\tau}(f^{\tau}(h(x^{\tau})),y^{\tau}) \big]$. The task-averaged expected risk is then: $\mathcal{L}_{\mathcal{D}}=\frac{1}{\mathcal{T}}\sum_{{\tau}=1}^{\mathcal{T}} \mathcal{L}_{\mathcal{D}_{\tau}}$. For task ${\tau}$ and $n$ training samples from $S_{\tau} = (\bar{X}^{\tau}, \bar{Y}^{\tau}) = \{(x^{\tau},y^{\tau})_i | i\in \{1,\dotsc,n\}\}$, we define the corresponding empirical risk as $\mathcal{L}_{S_{\tau}}(f^{\tau} \circ h)=\frac{1}{n}\sum_{i=1}^n l^{\tau}(f^{\tau}(h(x^{\tau}_i)),y^{\tau}_i) $. 
From Equation \ref{eq:mtl}, the task-averaged empirical risk becomes $\mathcal{L}_{S}=\frac{1}{\mathcal{T}}\sum_{{\tau}=1}^{\mathcal{T}} \mathcal{L}_{S_{\tau}}$.

\textbf{Theorem \ref{thm:gen_err}} 
\cite{JMLR:v6:ando05a}
provides a task-averaged empirical generalization bound for multi-task learning problems.

\begin{theorem}
\label{thm:gen_err} Let $\delta \in (0,1)$. Assuming $f$ is $\rho$-Lipschitz, with probability at least $1-\delta$, for every $h\in \mathcal{H}$ and every  $f \in \mathcal{F}$, the multi-task learning generalization error $\Delta(\mathcal{L})$ is bounded by $\mathcal{B}_{\mathcal{H}}$:

\vspace{-1em}
\begin{multline}
\Delta(\mathcal{L}) = \mathcal{L}_{\mathcal{D}} - \mathcal{L}_{S}  \leq \mathcal{B}_{\mathcal{H}} = \\
c_{1}\frac{\rho G(\mathcal{H}(\bar{X}))}{n\mathcal{T}}+c_{2}\frac{%
Q(\mathcal{F})\sup_{h\in \mathcal{H}}\left\Vert h\left( \bar{X}\right) \right\Vert}{n\sqrt{\mathcal{T}}} + c_{3}, \label{eq:theorem}
\end{multline}%

where $c_{1}$, $c_{2}$ are universal constants\label{thm:unifMTL}, $c_{3} = \sqrt{\frac{9\ln \left( 2/\delta \right) }{2n\mathcal{T}}}$. We denote by $\gamma $ a generic vector of independent standard normal
variables and define the Gaussian average $G(\mathcal{H}(\bar{X}))$ of a subset $\mathcal{H}(\bar{X}) \in \mathbb{R}^d$ as:
\begin{equation}
G(\mathcal{H}(\bar{X})) =\mathbb{E}\sup_{h \in \mathcal{H}}\left\langle \gamma ,y\right\rangle
=\mathbb{E}\sup_{h \in \mathcal{H}}\sum_{i,{\tau}}\gamma _{i}g_{i}(x_{\tau}).
\end{equation}

and

\vspace{-0.5em}
\begin{equation}
Q\left( \mathcal{F} \right) =\sup_{\mathbf{y},\mathbf{y}^{\prime }\in Y,~%
\mathbf{y}\neq \mathbf{y}^{\prime }}\mathbb{E}\sup_{f\in \mathcal{F} }\frac{%
\left\langle \mathbf{\gamma },f\left( \mathbf{y}\right) -f\left( \mathbf{y}%
^{\prime }\right) \right\rangle }{\left\Vert \mathbf{y}-\mathbf{y}^{\prime
}\right\Vert }.
\end{equation}
\end{theorem}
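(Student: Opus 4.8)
The statement is Theorem~\ref{thm:gen_err}, attributed to \cite{JMLR:v6:ando05a}, so the plan is not to reinvent the Gaussian-average machinery from scratch but to assemble the bound from standard components of vector-valued Rademacher/Gaussian complexity theory. First I would reduce the task-averaged generalization gap $\Delta(\mathcal{L}) = \mathcal{L}_{\mathcal{D}} - \mathcal{L}_{S}$ to a uniform deviation over the composite hypothesis class $\{(f^1\circ h,\dots,f^{\mathcal{T}}\circ h) : h\in\mathcal{H}, f\in\mathcal{F}\}$ via a single application of McDiarmid's bounded-differences inequality. Since each loss $l^\tau$ takes values in $[0,1]$ and there are $n\mathcal{T}$ i.i.d.\ sample coordinates, changing one coordinate perturbs $\mathcal{L}_S$ by at most $1/(n\mathcal{T})$, which yields the additive term $c_3 = \sqrt{9\ln(2/\delta)/(2n\mathcal{T})}$ (the constant $9$ absorbing a symmetrization overhead). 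This bounds $\Delta(\mathcal{L})$ by its expectation plus $c_3$.

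Next I would symmetrize, replacing the expected supremum of the empirical process by a Gaussian complexity of the loss-composed class (Gaussian rather than Rademacher because $G(\mathcal{H}(\bar X))$ in the statement is a Gaussian average; the two are equivalent up to $\sqrt{\log}$ factors absorbed in the universal constants $c_1,c_2$). The crux is then a two-stage contraction/decomposition of this complexity into the two displayed terms. Using that each $l^\tau$ is $1$-Lipschitz, the vector-contraction inequality (Maurer's lemma) peels off the loss, leaving the Gaussian complexity of $\{(f^\tau\circ h)_\tau\}$. I would then split this: one part in which $f$ is held fixed and only $h$ varies --- handled by the $\rho$-Lipschitz assumption on $f$ and another contraction step, producing $c_1\,\rho\,G(\mathcal{H}(\bar X))/(n\mathcal{T})$ --- and a complementary part in which the variation is attributed to $f\in\mathcal{F}$ around the representation values $h(\bar X)$, which is exactly what the quantity $Q(\mathcal{F})$ measures: it is a sup over pairs $\mathbf{y}\neq\mathbf{y}'$ of the normalized Gaussian average of $f(\mathbf{y})-f(\mathbf{y}')$, so multiplying by $\sup_{h}\|h(\bar X)\|$ and normalizing by $n\sqrt{\mathcal{T}}$ gives the second term $c_2\,Q(\mathcal{F})\sup_{h}\|h(\bar X)\|/(n\sqrt{\mathcal{T}})$. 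The $\sqrt{\mathcal{T}}$ versus $\mathcal{T}$ scaling in the two denominators arises because the representation $h$ is shared across all $\mathcal{T}$ tasks (so its complexity is amortized by a full factor $\mathcal{T}$) whereas the task-specific heads $f^\tau$ are not, giving only the $\sqrt{\mathcal{T}}$ concentration gain.

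The main obstacle is the clean separation of the shared-representation complexity from the task-specific complexity without cross terms: a naive union bound over $f\in\mathcal{F}$ would destroy the $1/(n\mathcal{T})$ rate on the $h$-term. The right tool is to condition on $h(\bar X)$ and observe that, given the matrix of representation values, the $f$-dependence is a Gaussian process over a fixed point set of diameter controlled by $\sup_h\|h(\bar X)\|$, so its complexity factors through $Q(\mathcal{F})$ as a scale-free quantity; then one takes expectation over $h$ and applies the contraction for the $\rho$-Lipschitz $f$ to the residual. I would present the argument at the level of these three lemmas (McDiarmid, vector contraction, conditional Gaussian-process decomposition) and cite \cite{JMLR:v6:ando05a} for the precise constant bookkeeping, since the theorem is quoted rather than claimed as novel. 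I would also note explicitly that, in our setting, $\mathcal{H}$ is instantiated by the \upps rule-selection map and $\mathcal{F}$ by the frozen-PLM-plus-prompt head, so that $G(\mathcal{H}(\bar X))$ is the quantity subsequently bounded in terms of $k$ and $N$.
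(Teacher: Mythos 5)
The paper does not prove this theorem at all: it is imported as a known result (see the opening of the proofs appendix, which states the theorem ``presented'' the bound and then proceeds directly to the corollaries), so there is no in-paper argument to compare yours against. Your reconstruction is a faithful outline of the standard proof of this bound from the representation-learning literature --- McDiarmid on the $n\mathcal{T}$ sample coordinates giving $c_3$, symmetrization to a Gaussian complexity, Lipschitz contraction to peel off the losses, and then the decomposition of the Gaussian average of the composite class $\mathcal{F}\circ\mathcal{H}$ into a shared-representation term scaling as $G(\mathcal{H}(\bar X))/(n\mathcal{T})$ and a task-specific term scaling as $Q(\mathcal{F})\sup_h\Vert h(\bar X)\Vert/(n\sqrt{\mathcal{T}})$. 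That last decomposition is exactly the ``chain rule for Gaussian averages'' of Maurer, Pontil, and Romera-Paredes, whose Theorem~5 is the bound being quoted here (the paper's citation to the JMLR v6 reference points to a precursor with a related but differently stated bound); your ``conditional Gaussian-process decomposition'' is that lemma in different clothing. Two small cautions: (i) the claim that Rademacher and Gaussian complexities are ``equivalent up to $\sqrt{\log}$ factors absorbed in the universal constants'' is not quite right, since that factor depends on the sample size --- what is actually needed is only the one-sided comparison (Rademacher bounded by a universal constant times Gaussian, via Slepian/Jensen), which does hold with an absolute constant; (ii) note that the $n$ in the theorem is the per-task sample count, whereas $n$ elsewhere in the paper denotes the number of rules, so your closing remark about instantiating $\mathcal{H}$ with the rule-selection map should be careful not to conflate the two.
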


\begin{remark}
\label{rem:sum_h}
In the case of \upps, $h = \{g^{\tau}\}^{\mathcal{T}}_{{\tau}=1} = \{\sum_{r \in \{r\}_{\tau}} \widetilde{\mathrm{H}}_{r}\}^{\mathcal{T}}_{{\tau}=1}$, where $g$ is the task-specific function from Equation \ref{eq:cond_p}, defined as the sum of the representations of $|\{r\}_{\tau}| = k$ rules/heads $\widetilde{\mathrm{H}}_{r}$, where $\{r\}_{\tau}$ is the set of chosen rule indexes for task $t$ defined in Equation \ref{eq:rules}. Since rule selection is dependent on task specific conditions, there are no guarantees that $g^{\widetilde{{\tau}}} = g^{\bar{{\tau}}}$ for any two tasks $\widetilde{{\tau}}, \bar{{\tau}} \in \{1,\dotsc,\mathcal{T}\}, \bar{{\tau}}\neq \widetilde{{\tau}}$.
\end{remark}

\begin{remark}
\label{rem:target_task}
Let $h(\bar{X_{\tau}}) \sim \mu_{\tau}$ define the representation distribution over $\mathbb{R}^d$ and ${\tau}=\widetilde{{\tau}}$ be the target task. A maximum positive transfer from tasks ${\tau} \in \{1,\dotsc,\mathcal{T}\}, {\tau}\neq \widetilde{{\tau}}$ to $\widetilde{{\tau}}$, implies that $\mu_{\widetilde{{\tau}}} = \mu_1 = \dotsc = \mu_{\mathcal{T}}$, even if $h$ may differ from task to task as stated in remark \ref{rem:sum_h}.
\end{remark}

\begin{corollary}
\label{cor:lower_gen_err}
Let $\widetilde{\mathcal{H}} \subseteq  \mathcal{H}$ and $\Delta(\widetilde{\mathcal{L}})$ be the multi-task learning generalization error when $\mu_{\widetilde{{\tau}}} = \mu_1 = \dotsc = \mu_{\mathcal{T}}$. With probability at least $1-\delta$, the bound $\mathcal{\widetilde{B}}_{\widetilde{\mathcal{H}}}$ is:
\begin{equation}
\label{eq:lower_gen_err}
\Delta(\widetilde{\mathcal{L}}) \leq \widetilde{\mathcal{B}}_{\widetilde{\mathcal{H}}} \leq \mathcal{B}_{\mathcal{H}}
\end{equation}%
\end{corollary}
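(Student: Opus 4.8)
The plan is to show that restricting attention to the sub-hypothesis class $\widetilde{\mathcal H}\subseteq\mathcal H$ on which all tasks induce the same representation distribution $\mu_{\widetilde\tau}=\mu_1=\dots=\mu_{\mathcal T}$ can only shrink the two data-dependent complexity terms appearing in the bound $\mathcal B_{\mathcal H}$ of Theorem \ref{thm:gen_err}, while the constant term $c_3$ is unchanged. The first inequality $\Delta(\widetilde{\mathcal L})\le\widetilde{\mathcal B}_{\widetilde{\mathcal H}}$ is just Theorem \ref{thm:gen_err} applied verbatim to the restricted class $\widetilde{\mathcal H}$ (and the associated $\widetilde{\mathcal F}$), holding with probability at least $1-\delta$; nothing new is needed there. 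So the substance is the second inequality $\widetilde{\mathcal B}_{\widetilde{\mathcal H}}\le\mathcal B_{\mathcal H}$.

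First I would handle the Gaussian average term. Since $\widetilde{\mathcal H}\subseteq\mathcal H$, for any fixed sample $\bar X$ we have $\widetilde{\mathcal H}(\bar X)\subseteq\mathcal H(\bar X)$ as subsets of $\mathbb R^{d}$ (or of $\mathbb R^{nd\mathcal T}$ in the stacked form used in the definition of $G$), hence
\[
G(\widetilde{\mathcal H}(\bar X))=\mathbb E\sup_{h\in\widetilde{\mathcal H}}\langle\gamma,y\rangle\;\le\;\mathbb E\sup_{h\in\mathcal H}\langle\gamma,y\rangle=G(\mathcal H(\bar X)),
\]
because a supremum over a smaller set is no larger, pointwise in $\gamma$, and expectation preserves the inequality. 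The same monotonicity gives $\sup_{h\in\widetilde{\mathcal H}}\|h(\bar X)\|\le\sup_{h\in\mathcal H}\|h(\bar X)\|$, which controls the second term provided $Q(\widetilde{\mathcal F})\le Q(\mathcal F)$. For $Q$, I would argue that the task-prediction class need not grow: one may take the same $\mathcal F$ (so $\widetilde{\mathcal F}=\mathcal F$ and $Q(\widetilde{\mathcal F})=Q(\mathcal F)$), or, if one insists on pairing $\widetilde{\mathcal H}$ with a correspondingly restricted $\widetilde{\mathcal F}$, the supremum defining $Q$ is again over a subset and therefore does not increase. Combining the three comparisons term by term, and noting $c_1,c_2,\rho,n,\mathcal T$ and $c_3$ are identical on both sides, yields $\widetilde{\mathcal B}_{\widetilde{\mathcal H}}\le\mathcal B_{\mathcal H}$.

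To close the argument I would then note that the two probabilistic statements can be taken to hold on the same event: Theorem \ref{thm:gen_err} for $\widetilde{\mathcal H}$ gives $\Delta(\widetilde{\mathcal L})\le\widetilde{\mathcal B}_{\widetilde{\mathcal H}}$ with probability $\ge1-\delta$, and the deterministic chain $\widetilde{\mathcal B}_{\widetilde{\mathcal H}}\le\mathcal B_{\mathcal H}$ holds always, so the full chain \eqref{eq:lower_gen_err} holds with probability $\ge1-\delta$. The main obstacle — really the only delicate point — is making precise that the restriction defined semantically by the condition ``$\mu_{\widetilde\tau}=\mu_1=\dots=\mu_{\mathcal T}$'' does cut out a genuine \emph{subset} of $\mathcal H$ rather than some reparametrised class, so that the set-inclusion monotonicity of $G$, of the norm supremum, and of $Q$ can be invoked cleanly; one should also be careful that in \upps the per-task maps $g^\tau$ may differ (Remark \ref{rem:sum_h}) yet the relevant object for the bound is the shared representation $h=\{g^\tau\}_{\tau=1}^{\mathcal T}$ whose image distributions are forced to coincide by Remark \ref{rem:target_task}, which is exactly what places $h$ in $\widetilde{\mathcal H}$.
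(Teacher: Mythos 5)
Your proposal is correct, but it takes a more elementary route than the paper. The paper's proof first uses the hypothesis $\mu_{\widetilde{\tau}}=\mu_1=\dotsc=\mu_{\mathcal T}$ to \emph{factorize} the two complexity terms over tasks, writing $G(\widetilde{\mathcal H}(\bar X))=\mathcal T\cdot G(\widetilde{\mathcal H}(\bar X_{\widetilde{\tau}}))$ and $\sup_{\widetilde h}\Vert \widetilde h(\bar X)\Vert=\sqrt{\mathcal T}\cdot\sup_{\widetilde h}\Vert \widetilde h(\bar X_{\widetilde{\tau}})\Vert$, and only then invokes $\widetilde{\mathcal H}\subseteq\mathcal H$ together with monotonicity of Gaussian averages to dominate these by $G(\mathcal H(\bar X))$ and $\sup_{h\in\mathcal H}\Vert h(\bar X)\Vert$; the $Q(\mathcal F)$ factor is left untouched, as in your argument. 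You instead skip the factorization entirely and apply set-inclusion monotonicity of the supremum (pointwise in $\gamma$, then in expectation) directly to $G$ and to the norm term. For the inequality as stated this is sufficient and arguably cleaner: since $\widetilde{\mathcal H}\subseteq\mathcal H$ is a standing hypothesis of the corollary, $\widetilde{\mathcal B}_{\widetilde{\mathcal H}}\le\mathcal B_{\mathcal H}$ follows from monotonicity alone, and your handling of the first inequality (Theorem~\ref{thm:gen_err} applied verbatim to the restricted class, on the same $1-\delta$ event) matches what the paper implicitly does. What the paper's detour buys is a quantitative reading --- the multi-task complexity of the shared-distribution class collapses to $\mathcal T$ (resp.\ $\sqrt{\mathcal T}$) copies of a single-task quantity, which is the intuition behind why representation sharing tightens the bound --- at the cost of a step (identifying the per-task sums realization-by-realization from mere equality of the distributions $\mu_\tau$) that is less airtight than your direct subset argument. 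Your closing caveat about whether the semantic condition carves out a genuine subset is well taken but moot here, since the corollary assumes $\widetilde{\mathcal H}\subseteq\mathcal H$ outright.
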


\begin{corollary}
\label{cor:upps_gen_err}
Consider the \upps architecture where $\{r\}_{\tau}$ is a set of $k$ rules randomly chosen from $N$ options for task $t$ and that $\mu_{i} = \mu_{j}$ whenever $\{r\}_i=\{r\}_j$. With probability at least $1-\delta$, we find that the multi-task learning generalization error $\Delta(\mathcal{L_{\upps}})$ is bounded by:
\begin{multline}
\label{eq:upps_gen_err}
\Delta(\mathcal{L_{\upps}}) \leq
(1-\mathcal{P}(N,k,\mathcal{T})) \widetilde{\mathcal{B}}_{\widetilde{\mathcal{H}}} + \mathcal{P}(N,k,\mathcal{T}){\mathcal{B}}_{{\mathcal{H}}}
\end{multline}%
where $\mathcal{P}(N,k,\mathcal{T}) = \frac{(\mathcal{T}k)!\mathcal{S}(N,\mathcal{T}k)}{k^N}$ and $\mathcal{S}(N,\mathcal{T}k)$ is the Sterling Number of the second kind. Equations \ref{eq:upps_gen_err}, \ref{eq:lower_gen_err} imply:
\begin{equation}
\label{eq:upps_gen_bound}
\sup{\Delta(\widetilde{\mathcal{L}})} \leq \sup{\Delta(\mathcal{L_{\upps}})} \leq \sup{\Delta(\mathcal{L})}
\end{equation}%
\end{corollary}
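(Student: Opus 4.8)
The plan is to prove Corollary~\ref{cor:upps_gen_err} by combining the deterministic sandwich of Corollary~\ref{cor:lower_gen_err} with a probabilistic argument over the random rule assignment. First I would fix the $\mathcal{T}$ tasks and, for each task $\tau$, think of $\{r\}_\tau$ as a uniformly random $k$-subset of $\{1,\dots,N\}$ (or, following the corollary's phrasing, think of the $\mathcal{T}k$ selected rule-slots as being placed into $N$ bins). The key event is $\mathcal{A} = \{$the collection of selected rule-sets spans all $N$ options, i.e.\ every rule is used by at least one task$\}$. Under the hypothesis $\mu_i = \mu_j$ whenever $\{r\}_i = \{r\}_j$, the case analysis is: on $\mathcal{A}^c$ not all rules appear, so the effective representation class is a strict subclass and Remark~\ref{rem:sum_h}/Corollary~\ref{cor:lower_gen_err} give the tighter bound $\widetilde{\mathcal{B}}_{\widetilde{\mathcal{H}}}$; on $\mathcal{A}$ we fall back to the generic bound $\mathcal{B}_{\mathcal{H}}$ of Theorem~\ref{thm:gen_err}. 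Taking expectations over the random assignment then yields
\begin{equation}
\Delta(\mathcal{L}_{\upps}) \leq (1-\mathcal{P})\,\widetilde{\mathcal{B}}_{\widetilde{\mathcal{H}}} + \mathcal{P}\,\mathcal{B}_{\mathcal{H}},\qquad \mathcal{P} = \Pr[\mathcal{A}].
\end{equation}

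The second step is the combinatorial identification $\mathcal{P}(N,k,\mathcal{T}) = \frac{(\mathcal{T}k)!\,\mathcal{S}(N,\mathcal{T}k)}{k^N}$. Here I would count surjections: the number of ways to distribute the $\mathcal{T}k$ selected slots so that all $N$ rules are hit is the number of surjective maps from a $\mathcal{T}k$-element set onto an $N$-element set, which is $(\mathcal{T}k)!\,\mathcal{S}(N,\mathcal{T}k)$ by the standard identity relating Stirling numbers of the second kind to surjections, while the total number of unconstrained placements is $k^N$ under the model implicit in the corollary's formula. Dividing gives $\mathcal{P}$. (I would note in passing that this requires $\mathcal{T}k \geq N$ for $\mathcal{P}>0$, consistent with the paper's recommendation $k = N/2$ so that even two tasks can in principle cover all rules; the precise sampling model has to be the one that makes this ratio a probability, and I would state it explicitly.)

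The third step is to derive the uniform bound \eqref{eq:upps_gen_bound}. Since $\mathcal{P}(N,k,\mathcal{T}) \in [0,1]$, the right-hand side of \eqref{eq:upps_gen_gen_err}—a convex combination of $\widetilde{\mathcal{B}}_{\widetilde{\mathcal{H}}}$ and $\mathcal{B}_{\mathcal{H}}$—lies between these two endpoints, and by Corollary~\ref{cor:lower_gen_err} we have $\widetilde{\mathcal{B}}_{\widetilde{\mathcal{H}}} \leq \mathcal{B}_{\mathcal{H}}$; taking suprema over $h$ (equivalently over the hypothesis classes) and using that $\sup \Delta(\widetilde{\mathcal{L}}) \leq \widetilde{\mathcal{B}}_{\widetilde{\mathcal{H}}}$ and $\sup \Delta(\mathcal{L}) \leq \mathcal{B}_{\mathcal{H}}$ from the respective statements chains the inequalities into $\sup \Delta(\widetilde{\mathcal{L}}) \leq \sup \Delta(\mathcal{L}_{\upps}) \leq \sup \Delta(\mathcal{L})$. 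The probability-$1-\delta$ qualifier is inherited verbatim from Theorem~\ref{thm:gen_err} since it conditions only the high-probability generalization event, which is independent of the (separately randomized) rule assignment, so a union bound or conditioning argument keeps the confidence level.

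The main obstacle I anticipate is pinning down the exact probabilistic model for rule selection so that the Stirling-number formula is literally a probability: the denominator $k^N$ rather than $\binom{N}{k}^{\mathcal{T}}$ signals that the intended model is not "each task picks a uniform $k$-subset" but something like "each of the $N$ rules independently picks which of $k$ slots/tasks it belongs to," and reconciling that with the description "$\{r\}_\tau$ is a set of $k$ rules randomly chosen from $N$ options" needs care—this is where the argument is most fragile and where I would spend the most effort making the counting and the combinatorial event $\mathcal{A}$ match. Everything downstream (the convex-combination sandwich, inheriting $1-\delta$, chaining the suprema) is routine once that model is fixed.
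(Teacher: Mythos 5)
Your overall architecture matches the paper's: a two-case split on a random event in the rule assignment, assigning the tight bound $\widetilde{\mathcal{B}}_{\widetilde{\mathcal{H}}}$ to one case and the generic bound $\mathcal{B}_{\mathcal{H}}$ to the other, taking an expectation to get the convex combination, identifying $\mathcal{P}$ via the surjection--Stirling identity, and then chaining with Corollary~\ref{cor:lower_gen_err} for the final sandwich. But there is a genuine gap in the middle step: your event $\mathcal{A}$ (``every one of the $N$ rules is used by some task'') is not the event the argument needs, and on its complement $\mathcal{A}^c$ you are not entitled to $\widetilde{\mathcal{B}}_{\widetilde{\mathcal{H}}}$. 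Corollary~\ref{cor:lower_gen_err} does not give a tighter bound merely because the effective representation class is a strict subclass; it requires the representation distributions to coincide across tasks, $\mu_{\widetilde{\tau}}=\mu_1=\dotsc=\mu_{\mathcal{T}}$, and under the corollary's hypothesis ($\mu_i=\mu_j$ whenever $\{r\}_i=\{r\}_j$) that is triggered only when tasks select \emph{identical} rule sets. Take $N=4$, $k=2$, $\mathcal{T}=2$ with $\{r\}_1=\{1,2\}$ and $\{r\}_2=\{1,3\}$: rule $4$ is unused, so you are on $\mathcal{A}^c$, yet the rule sets differ and nothing forces $\mu_1=\mu_2$, so $\widetilde{\mathcal{B}}_{\widetilde{\mathcal{H}}}$ does not apply. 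The paper instead takes the $\mathcal{P}$-event to be ``all $\mathcal{T}k$ selected rules are distinct (no overlap between tasks)'' (Lemma~\ref{lem:prob}), and invokes the tight bound on the complementary ``overlapping'' case via Proposition~\ref{prop:selec} --- which is where the $\mu_i=\mu_j$ hypothesis actually bites.

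Two smaller points on the combinatorics, which you rightly flagged as the fragile spot. First, $(\mathcal{T}k)!\,\mathcal{S}(N,\mathcal{T}k)$ counts surjections from an $N$-element set \emph{onto} a $\mathcal{T}k$-element set, not (as you wrote) from a $\mathcal{T}k$-element set onto an $N$-element set; that count would be $N!\,\mathcal{S}(\mathcal{T}k,N)$. Consequently the formula is nonvanishing only when $N\geq \mathcal{T}k$, the opposite of the condition $\mathcal{T}k\geq N$ that your covering event $\mathcal{A}$ would require --- a further sign that $\mathcal{A}$ is not the intended event. The paper's reading is that a sample is an $N$-vector over $k$ choices (hence the $k^N$ denominator) and $\mathcal{P}$ is the probability that the $\mathcal{T}k$ selected rules are all distinct. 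Your final step --- the convex combination lies between its endpoints, $\widetilde{\mathcal{B}}_{\widetilde{\mathcal{H}}}\leq\mathcal{B}_{\mathcal{H}}$ from Corollary~\ref{cor:lower_gen_err}, and the $1-\delta$ confidence is inherited from Theorem~\ref{thm:gen_err} --- is fine and matches the paper.
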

An observation from Corollary \ref{cor:upps_gen_err} is that $\Delta(\mathcal{L_{\upps}})$ is a function of $k$ and $N$ via $\mathcal{P}(N,k,\mathcal{T})$. We refer the reader to Appendix \ref{sec:more_theory} for proofs and assumptions.

\subsection{Experiment: How do $k$ and $N$ affect performance?}
\label{app:k_N_ablation}
We show in Table \ref{tab:k_v_n} that using more rules (higher $k$) does not always translate into better performance. 
Further, using all rules is equivalent to our \trsfp and \utrsfp baselines.

\begin{table}[ht!]

    \footnotesize
    \centering
    \setlength\tabcolsep{4.5pt}
    \vskip 0.05in
    \begin{tabular}[b]{c|cccc|cccc}
    \toprule
     & \multicolumn{4}{c}{$1-\mathcal{P}(N,k,\mathcal{T})$} & \multicolumn{4}{c}{\upps} \\
    \diagbox[height=1.5\line]{$N$}{$\mathcal{T}k$}  & 4 & 6 & 8 & 10 & 4 & 6 & 8 & 10 \\
    \midrule
    12  & \gradient{0.13} & \gradient{0.56} & \gradient{0.91} & \gradient{0.99} & \gradientresult{23.6} & \gradientresult{27.7} & \gradientresult{27.8} & \gradientresult{27.2} \\
    13  & \gradient{0.09} & \gradient{0.49} & \gradient{0.86} & \gradient{0.73} & \gradientresult{22.8} & \gradientresult{27.4} & \gradientresult{28.3} & \gradientresult{30.2} \\
    14  & \gradient{0.07} & \gradient{0.42} & \gradient{0.81} & \gradient{0.97} & \gradientresult{22.5} & \gradientresult{27.1} & \gradientresult{29.0} & \gradientresult{27.6} \\
    15  & \gradient{0.05} & \gradient{0.36} & \gradient{0.75} & \gradient{0.95} & \gradientresult{22.0} & \gradientresult{26.8} & \gradientresult{29.3} & \gradientresult{27.6} \\
    16  & \gradient{0.04} & \gradient{0.30} & \gradient{0.69} & \gradient{0.93} & \gradientresult{21.8} & \gradientresult{26.8} & \gradientresult{30.7} & \gradientresult{27.8} \\
    \bottomrule
        
    \end{tabular}

    \caption{\label{tab:k_v_n} \small Theoretical vs. Experimental assessment of $k$ and $N$. For \upps, we present the average test score of $\protect\overrightarrow{\text{fr-en}}$ and $\protect\overrightarrow{\text{en-fr}}$.
    }
\end{table}

We assess \upps dependence on $k$, the top-$k$ selection of rules, and $N$, the number of heads/rules in the architecture, for $\mathcal{T}=2$ tasks. 
For this analysis, we choose Europarl language pairs $\overrightarrow{\text{fr-en}}$ and $\overrightarrow{\text{en-fr}}$, trained and tested on $1000$ samples. 
We see in Table \ref{tab:k_v_n}, for $\mathcal{T}k=\{4,6\}$, that higher $1-\mathcal{P}(N,k,\mathcal{T})$ does translate into higher \upps performance as predicted by Equation \ref{eq:upps_gen_bound}. 
However, for other $\mathcal{T}k$ values \upps seems to reach a maximum at $1-\mathcal{P}(N,k,\mathcal{T})\approx 0.7$, and then drops when $1-\mathcal{P}(N,k,\mathcal{T}) > 0.7$.  
While equation \ref{tab:k_v_n} does not predict such threshold, we conjecture that too much rule overlap may oversmooth task specific information, since equal $g^t$ across tasks may allow $\mu_1 = \dotsc = \mu_{\mathcal{T}}$  (see remarks \ref{rem:sum_h} and \ref{rem:target_task}). 
Nonetheless, the heat-map in Table \ref{tab:k_v_n} expresses a similar pattern between $1-\mathcal{P}(N,k,\mathcal{T})$ and sacreBLEU scores. Interestingly, we also note that $k=\frac{n}{2} > 1$ is typically the maximum performance for a given $n$, which is also the maximum number of combinations that we get from choosing $k$ rules from $n$ candidates. 
More generally, \upps works best when $\mathcal{T}k=\frac{1}{2}N$ (see Table \ref{tab:k_v_n}), for $\mathcal{T}$ the number of tasks and $N$ the total number of available rules. 
As $\mathcal{T}$ increases, \upps becomes more sparse since the number $k$ also decreases.
We show in Table \ref{tab:nmt_sep} that one rule can be associated with one task, creating a very sparse rule-task association similar to ProdSys.

\subsection{Are modules reused and composed for new tasks?}
\label{app:rule_compose}

To analyse the effects of a detailed task instruction on rule selection, we use the experiments in the previous Section \ref{sec:composition}. We measure head usage statistics on 1K examples from validation sets. 
The results plotted in Figure \ref{fig:rule_usage} show that (1) heads are sparsely selected, (2) there is little overlap between support task head usage\footnote{We show in Appendix~\ref{sec:sparse_selection} that under certain conditions, one rule can be directly and sparsely associated with one task.} and (3) the target task reuses the highest frequency heads of each support task. This results validates both sparse selection and module composition hypothesis.


\begin{figure}[ht]
\centerline{
\includegraphics[trim={0 0 0 0}, width=0.49\textwidth]{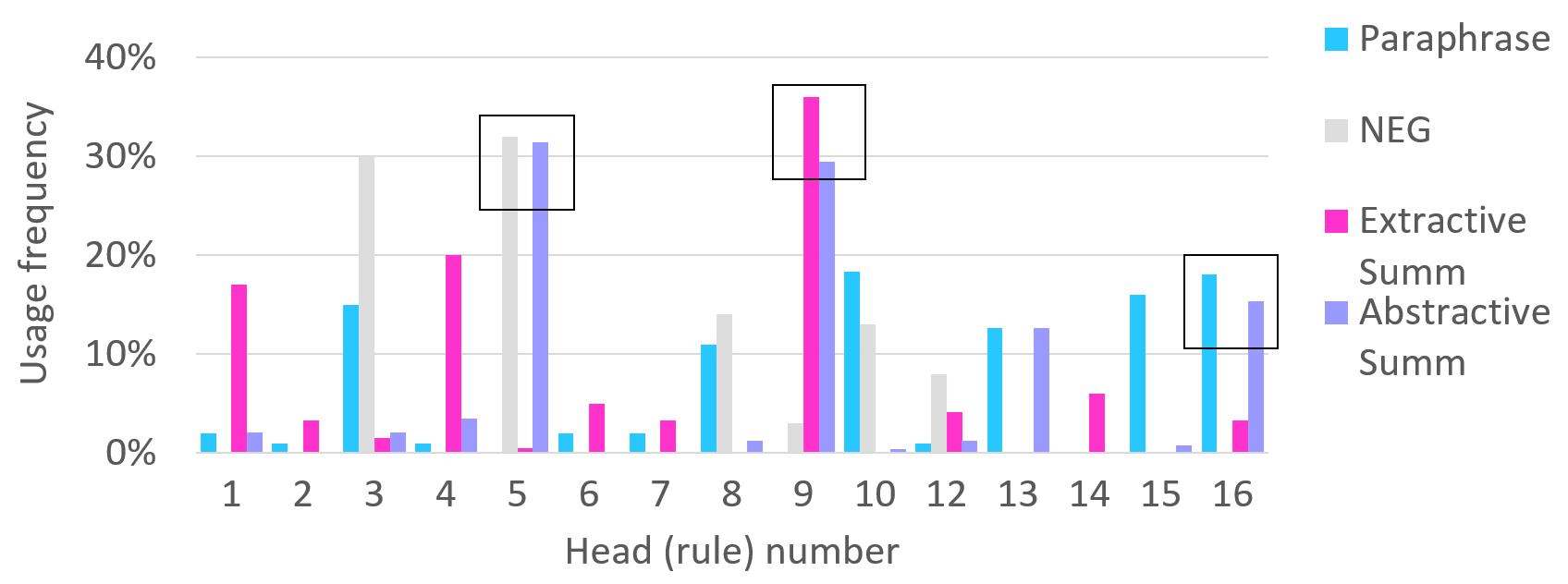}}
\vspace{-0.5em}
\caption{\small
\upps rule usage frequency of generated prompts for the first PLM layer. The three boxes correspond to the top 3 most selected heads for the target task (Abstractive Summarization) in a zero-shot setting. The boxes also correspond to the most frequently used head for the three support tasks (Paraphrase, NEG, Extractive Summarization) used to pretrain \upps.
}
\label{fig:rule_usage}
\end{figure}

\subsection{How does the number of bridge tasks affect composition?}
\label{app:bridge_compose}

\begin{wrapfigure}[9]{r}{0.15\textwidth}
\vspace{-15pt}
    \begin{center}
        \includegraphics[width=0.14\textwidth]{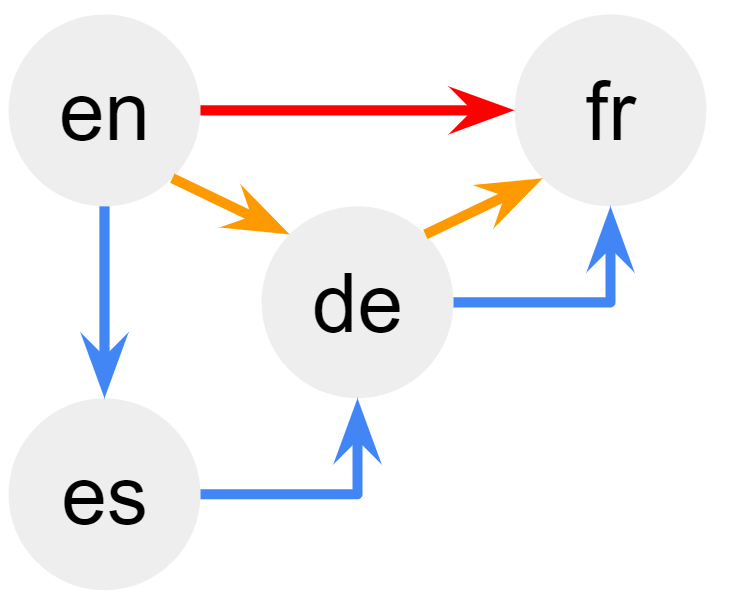}
    \end{center}
\vspace{-5pt}
    \caption{\small \label{fig:bridges} Bridges from en to fr.}
\end{wrapfigure}

As seen in Table~\ref{tab:conditions_eg}, translation was also a way to test ``task transfer''; e.g.: if we train with pairs en $\rightarrow$ fr and fr $\rightarrow$ de, the model can translate en $\rightarrow$ de with little supervision (see Section~\ref{sec:multitask} for more details).
The form of composition that we have experimented with is either induced from the choice or wording of textual conditions as explained in Section \ref{sec:composition} or expressed via the module selection mechanism explained in Appendix~\ref{app:mech_composition}.
We are also interested in understanding how ``bridge'' tasks (used as support tasks) can influence the performance of a target task when the path to solve a target task from bridge tasks is explicit.
For instance, in a multilingual setting (experimental results in Section \ref{sec:multitask}), \emph{choosing} to train \textbf{English (en) - French (fr) - German (de) - Spanish (es)} jointly offers multiple bridges as seen in Figure~\ref{fig:bridges}.
The mapping from en to fr can be learned directly (red line), by composing 2 tasks (orange line) or by composing 3 tasks (blue line).

\begin{table}[h]
    \footnotesize
    \centering
    \begin{tabular}[b]{l|cc}

        \textbf{Trained} & \multicolumn{2}{c}{Europarl \textbf{$\overrightarrow{\text{en-fr}}$}}\\
        \textbf{with:}    & \highlight{\textbf{\upps}} & \mbart \\
        \hline \\
        $\protect\overrightarrow{\text{en-fr}}$ & 13.4 & 13.6 \\
        + $\overrightarrow{\text{en-de}},\overrightarrow{\text{de-fr}}$ & 14.8 & 13.9 \\
        + $\overrightarrow{\text{en-es}},\overrightarrow{\text{es-de}},\overrightarrow{\text{de-fr}}$ & 15.4 & 14.0 \\

        \hline
    \end{tabular}
    \caption{\label{tab:bridge-tasks} \small Europarl translation test sacreBLEU ablation. $\protect\overrightarrow{\text{en-fr}}$ results. Each pair is trained with 50 examples chosen randomly.
    }
\end{table}

Our initial hypothesis was that if we trained models with either the two $\{\overrightarrow{\text{en-de}},\overrightarrow{\text{de-fr}}\}$ or the three $\{\overrightarrow{\text{en-es}},\overrightarrow{\text{es-de}},\overrightarrow{\text{de-fr}}\}$ unidirectional language pairs, we would be able to zero-shot translate $\overrightarrow{\text{en-fr}}$ using simple task instructions (e.g., "Translate English to French") and a single pass through the model.
However, all of our CPG models and our other baselines, including \mbart FT performed very poorly on this task, with SacreBLEU scores between 2 and 4.
We performed another ablation study in Table \ref{tab:bridge-tasks} to understand the effects of bridge tasks.
We first notice that \mbart FT has better performance in few-shot $\overrightarrow{\text{en-fr}}$ translation.
However, \upps's improvement from adding bridge tasks is much more pronounced with a +1.4 and +2.0 sacreBLEU score jump when adding $\{\overrightarrow{\text{en-de}},\overrightarrow{\text{de-fr}}\}$ or $\{\overrightarrow{\text{en-es}},\overrightarrow{\text{es-de}},\overrightarrow{\text{de-fr}}\}$ respectively to the initial $\overrightarrow{\text{en-fr}}$ training data. This experiment shows that \upps is better able to join and compose bridge tasks than the strong \mbart FT baseline.

\subsection{Are rules sparsely selected?}
\label{sec:sparse_selection}

As previously mentioned in Section \ref{sec:method}, \upps sparsely selects rules for a given situation or condition. In this analysis, we assess rule separation in an experiment similar to \citeauthor{goyal2021neuralprodsys} where our transformations are the translations $\overrightarrow{\text{en-xx}}$ from English to a target language xx\footnote{es=Spanish, da=Danish, el=Greek, de=German}. We train \upps with $20$k samples per language pair, $\mathcal{T}=4$ tasks, $k=4$ selections, $N=4$ heads and without using PLM layer hidden representations as conditions\footnote{We only use task instructions such as ``Translate fr to en''.}. Similarly to Neural Production Systems, \textbf{\upps successfully learns to represent each task with a separate rule} as presented in Table \ref{tab:nmt_sep}.

\begin{table}[ht!]

\footnotesize
\centering 
\renewcommand{\arraystretch}{1.2}

    \begin{tabular}{|c|c|c|c|c|}
        \hline
        & Rule 1 & Rule 2 &  Rule 3 & Rule 4 \\
        \hline
        $\overrightarrow{\text{en-es}}$ & 20k & 0 & 0 & 0 \\
        $\overrightarrow{\text{en-da}}$ & 0 & 20k & 0 & 0 \\
        $\overrightarrow{\text{en-el}}$ & 0 & 0 & 20k & 0 \\
        $\overrightarrow{\text{en-de}}$ & 0 & 0 & 0 & 20k \\
        \hline
    \end{tabular}

\caption{\label{tab:nmt_sep} \small Rule separation and sparse selection. Each cell indicates the number of times the corresponding rule is used for the given operation.}
\end{table}

\section{Additional details on general setup}
\label{app:more_gen_setup}

\paragraph{Prompt Length:} Since $T_C$ can be large, we specify a prompt length $T_P < T_C$ and take the first vectors $\langle \vc_t^{(L)} | t \in \{1, \ldots,  T_P\}\rangle$. The prefix length $T_C$ does not change the learnable parameters since conditional prompt generators (including \upps) are based on Transformer architectures that can produce variable size sequences.
This is another advantage compared to the baseline \prefix.

\paragraph{Datasets:} (1) XSum \cite{xsum} is an article to single-sentence abstractive summarization dataset. We include an out-of-topic (OOT) split \cite{prefix} where the model is trained on \{world, UK, business, news\} and tested on the remaining unseen categories such as politics, technology. (2) Topic-CNN-DM \cite{topic-aware} is a controlled summarization task that is guided by a specific subject. (3) Multilingual translation using Europarl \cite{europarl}. This dataset is used to evaluate multi-task abilities when we jointly train two and five language pairs.
(4) Semantic Parsing dataset SCAN \cite{scan} and seven compositional generalization splits where models generate a sequence of actions based on a command (see example \ref{ex:ex_scan}). 
XSum has 204.0K and 11.3K training and testing examples. 
XSum-OOT, which has 128.7K and 20.4K training and testing examples. 
Topic-CNN-DM, our controlled generation task, has 74.8K and 31.8K training and testing examples respectively.

For zero-shot and few-shot experiments in Section~\ref{sec:composition}, when the target task is XSum abstractive summarization, we pretrain our prompt generators with the following: 1) XSum Named Entity Generation (NEG) is a task similar to Named Entity Recognition. However we input a label (e.g. organization, country, person) and ask the model to generate entities as they appear in the article. 2) XSum extractive summarization, where the model is required to generate a sentence from the input article that is closest to the ground-truth summary. 3) XSum paraphrase tasks the model to generate novel formulations of the input sentences. We created all three datasets. Similarly, for Topic-CNN-DM, our three intermediate tasks are: 1) a CNLG version of question-answering dataset newsQA \cite{newsqa}, 2) CNN-DM extractive summarization \cite{cheng-lapata-2016-neural} and 3) XSum abstractive summarization. We refer the reader to Appendix \ref{sec:datasets} for more details on the data creation steps for XSum NEG/paraphrase, and data urls.

\paragraph{Training and Evaluation:} Pretrained models were loaded from HuggingFace.
Unless otherwise specified, we train \trsfp, \utrsfp and \upps with learning rate $1e^{-3}$, a seed of $1$, a length penalty of $1$ and an effective batch size of $96$, a warm-up ratio of $0.05$, use ``simple'' task instructions (Table \ref{tab:task_description}). 
We add prompt vectors to the bottom and top two layers for both encoder and decoder. For Topic-CNN-DM, SCAN and Europarl experiments, we run CPG models for $10$, $5$ and $20$ epochs respectively and \upps had $n=16$ rules and top $k=3$. For XSum, we train for $20$ epochs and use $n=12$. 
We report ROUGE-1/2/L 
and Meteor 
scores for abstractive summarization, sacreBLEU 
for translation and accuracy for semantic parsing. For \upps in Appendix~\ref{app:more_gen_setup}, we tried $k=\{2,3,4\}$ and $n=\{12,16\}$ on summarization, translation and semantic parsing. The percentage of trained parameter is calculated by dividing the trainable prompt generator parameters by all parameters (including the untrained PLM layers).

For zero-shot and few-shot experiments in Section~\ref{sec:composition}, we use two random seeds and train each support task jointly for $10$ epochs. Our target task is trained for 200 epochs with $\{50, 100, 200, 500\}$ training examples and tested on the models with the highest validation score.



\section{Additional details on datasets}
\label{sec:datasets}

\textbf{Xsum NEG} was constructed using Python library spaCy.
For each XSum dataset input article, we first detected all Named Entities (NE) using spaCy and kept the top 10 most frequent entities with the corresponding category. The NEG tasks consists of the following: given an input article and an NE category (e.g. “Companies, agencies, institutions” or “Countries, cities, states”) we need to generate the most common NE in the article (e.g. “Sony” or “UK”).

\textbf{Xsum Extractive} is created by scoring each sentence in the source article with the reference abstractive summary via F1 of ROUGE-1 and ROUGE-2, and use the highest scored one as the extractive target. 
For \textbf{Xsum Paraphrase}, we use Amazon Translate \footnote{https://aws.amazon.com/translate/} to perform round-trip translation from English to several pivot languages, and then back to English. Through this method, we aim to obtain paraphrases that are semantically equivalent with more lexical variations, thus encouraging the target abstractive summarization task to generate more novel text. Our pilot study (as in example \ref{ex:pivot}) shows that including Arabic and Chinese in the pivot languages \emph{generates more lexical variation compared to using European languages alone}.

\begin{example} \label{ex:pivot} 
\textbf{Source:} Those who participated in the Aberdeen Children of the 1950s project, 
which saw all primary pupils aged seven to 12 surveyed by the Medical Research Council in 1962, have been contacted. \myul[green]{They have been asked} to take part in the Scottish Family Health Study, which is recruiting 50,000 people. 
\myul[orange]{It aims to investigate} why diseases such as cancer can run in families. 
Those recruited will have their health tracked, \myul[yellow]{with the intention of} creating a Scottish "bio-bank" containing genetic, medical and family history and lifestyle information. 
This will allow researchers to investigate the contribution of genetic and environmental factors to common conditions. Blair Smith, a GP and professor of primary care medicine at Aberdeen University, will run the project.

\textbf{en-fr-de-en:} Those who participated in the Aberdeen Children of the 1950 project, 
where all elementary students between seven and twelve years of age were interviewed by the Medical Research Council in 1962, were contacted. 
\myul[green]{They were asked} to participate in the Scottish Family Health Study, which recruits 50,000 people. 
\myul[orange]{The aim is to investigate} why diseases such as cancer can occur in families. 
Recruited individuals see their health status being monitored \myul[yellow]{with the intention of} creating a Scottish “biobank” that contains genetic, medical, and family history and lifestyle information. 
Contribution of genetic and environmental factors to the general conditions Blair Smith, general practitioner and professor of basic care medicine at the University of Aberdeen, will lead the project.

\textbf{en-ar-fr-zh-en:} In 1962, the Medical Research Board contacted those who participated in the 1950s Aberdeen Children's Project, which saw all primary school students between the ages of 7 and 12. 
\myul[green]{They were invited} to take part in the Scottish Family Health Study, which recruited 50,000 people. 
\myul[orange]{Its purpose is to study} the causes of diseases such as cancer that may occur in the family. 
The health of recruiters will be tracked \myul[yellow]{in order to} create a Scottish “biobank” containing information on genetics, medicine and family history, as well as lifestyle. 
This will enable researchers to study the contribution of genetic and environmental factors to common conditions. Blair Smith, General Doctor and Junior Medicine Professor at the University of Aberdeen, will lead the project.

\end{example}

\begin{table}[ht!]
\footnotesize
\centering 
\renewcommand{\arraystretch}{1.2}

\vspace{2em}
    \begin{tabular}{|l|c|c|c|}
        \hline
        Dataset & Train &  Val & Test  \\
        \hline
        Xsum NEG        & 403,766 & 21,267 & 22,659 \\
        Xsum Extractive & 489,799 & 25,780 & - \\
        Xsum Paraphrase & 215,344 & 11,335 & - \\
        Xsum Abstractive & 204,017 & 11,333 & 11,327\\
        News QA & 2,992,677 & 82,821 & 127,261\\
        CNN-DM Extractive & 1,460,785 & 69,151 & 59,023\\
        Topic-CNN-DM     & 92,550 & 5,167& 5,127\\
        \hline
    \end{tabular}

\caption{\label{tab:dataset} Details of data sets. Some data sets used in intermediate tasks only do not have test split.}
\end{table}

News-QA\footnote{https://huggingface.co/datasets/newsqa}, CNN-DM Extractive Summarization\footnote{https://www.microsoft.com/en-us/research/project/newsqa-dataset/download/}, SCAN\footnote{https://huggingface.co/datasets/scan} and Europarl\footnote{https://huggingface.co/datasets/europarl\_bilingual} are publicly available datasets. The dataset sizes for each Europarl language pair can be found at this \href{https://opus.nlpl.eu/Europarl.php}{link}. For Europarl, the training set includes 90\% of the data, randomly chosen, and the remaining 10\% are used for the test and evaluation sets. For SCAN, the dataset sizes for each split can be found \href{https://github.com/brendenlake/SCAN}{here}.

\section{Additional details on baselines}
\label{app:baselines}

All PLMs are ``Large'' models with 12 Transformer layers in the encoder and 12 layers in the decoder. This translates into 406M, 610M and 770M parameters for \bart \cite{bart}, \mbart \cite{mbart}, \tfive \cite{t5} respectively. 
As we noted previously and shown in Figure~\ref{fig:baselines}, , \utrsfp is the same as \upps without the rule selection mechanism. 
In contrast to \trsfp and \prefix, the same \utrsfp and \upps layers are reused to generate prompts for all PLM layers. 
\trsfp and \prefix uses three different sets of parameters for the encoder, the decoder and the cross frozen PLM attention. 
For CPGs conditions are first tokenized and passed through a pretrained word embedding. We remind the reader that \upps is type of CPG.

\prefix introduces two hyperparameters that are important to performance: 1) the length of the prefix and 2) the number of layers of the parametrization network. Unlike prior work however, we provide theoretical guidelines that are generally applicable to a practitioner’s use case in Table~\ref{tab:k_v_n} of Appendix~\ref{app:k_N_ablation}. The theoretical and empirical study of k should simplify much of the burden of searching for the best k. Please note also, that we keep our seed equal to one in all high-resource experiments. This should be counted as one less hyperparameter to tune.

Note that \upps, \utrsfp \trsfp use multiheads as transformation functions. However, \utrsfp and \trsfp concatenate \emph{all} head representations and do not have a rule selection mechanism. As specified in \citeauthor{t5} when detailing the experiment setup of \adapter on \tfive, ``an inner dimensionality'' of 2048 achieves the best results, particularly since ``higher resource tasks require a large dimensionality to achieve reasonable performance''. We use 2048, or 14\% of trained parameters, to train \adapter.

\begin{figure}[ht]
\centerline{
\includegraphics[trim={0cm 0 0.1cm 0}, width=0.49\textwidth]{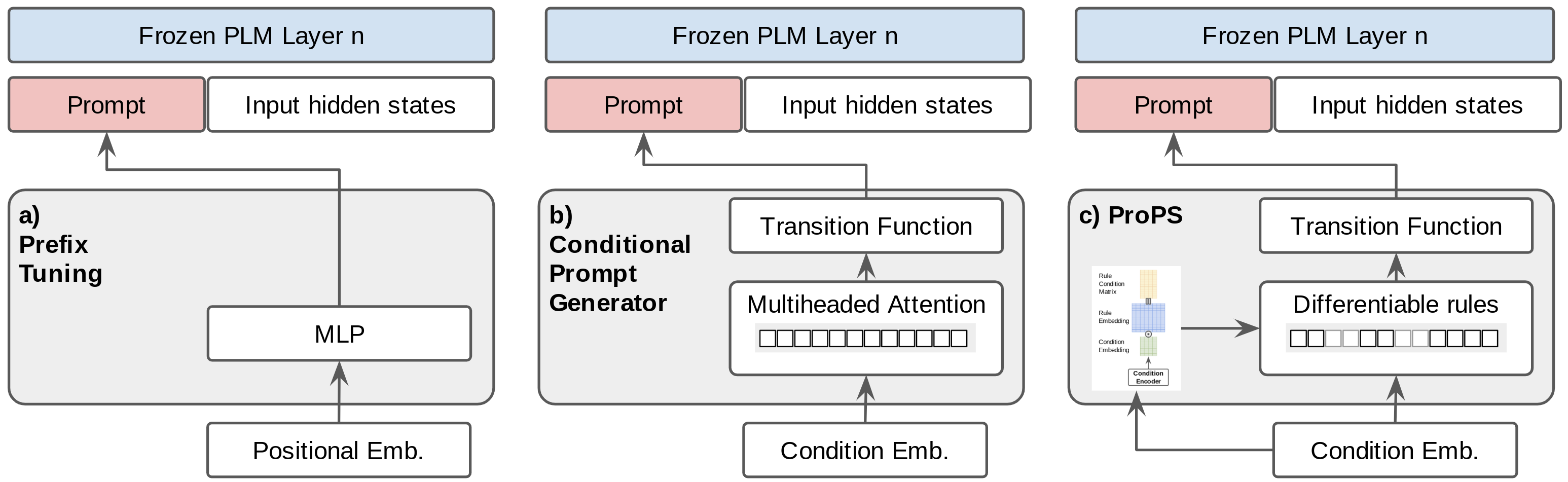}}
\caption{
\small Comparing architectures and visualizing baselines. On the left (a): the \prefix \protect\cite{prefix} Unconditional Prompt Generator uses a different model for encoder, decoder and inter frozen PLM attention. In the middle (b): a CPG with a single Transformer layer which can shared for encoder, decoder and inter frozen PLM attention. On the right (c): \upps.}
\label{fig:baselines}
\end{figure}

In Figure~\ref{fig:adapters}, we show the architectures of our adapter baselines. \camtl \cite{pilault2021conditionally} is a Transformer based adapter hypernetwork that works well in a NLU multi-task setting. For multilingual experiments in Section \ref{sec:main_results}, \camtl layers are insert into \mbart and uses a different task embedding for each language pair and translation direction.

\begin{figure}[ht]
\centerline{
\includegraphics[trim={0cm 0 0.1cm 0}, width=0.49\textwidth]{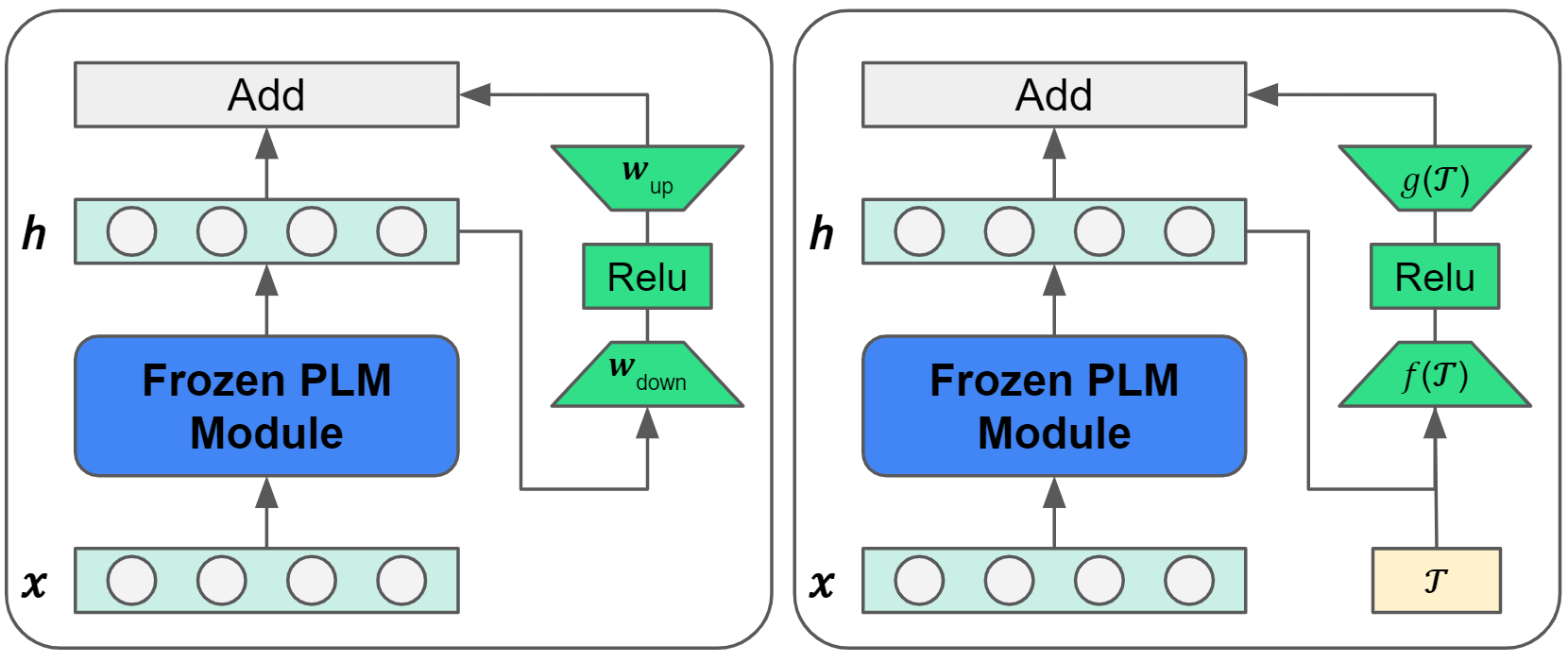}}
\vspace{-0.5em}
\caption{
\small A simplified graphical illustration of adapter architectures used as baselines. On the right the single task \adapter from \protect\citeauthor{adapter} and on the left \camtl, a multi-task adapter. \camtl uses hypernetwork, i.e., weights are generated by functions $f(\cdot)$ and $g(\cdot)$ conditioned on task embeddings $\mathcal{T}$.}
\label{fig:adapters}
\end{figure}

Note that unlike the Transformer multi-headed attention, $\widetilde{\mathrm{H}}_{r}$ heads (rules) are not concatenated but summed. 
The concatenation of head outputs does not maintain order and permutation invariance if $k < n$ heads are selected, potentially explaining much lower performance when compared to summation. 
Also note that $\widetilde{\mathrm{H}}_{r}$ does not attend to all condition sequences but only to itself $h_t$ and the selected context sequences $\mathcal{S}_{C'}$.

\section{Proofs and Assumptions}
\label{sec:more_theory}

Theorem \ref{thm:gen_err} presented a task-averaged empirical generalization bound for multi-task learning problems defined by Equation \ref{eq:theorem}.  We begin with the proof of Corollary \ref{cor:lower_gen_err} below:

\begin{proof}
Assume that for all $\mathcal{T}$ tasks, we have $\mu_{\widetilde{\tau}} = \mu_1 = \dotsc = \mu_\mathcal{T}$. With probability at least $1-\delta$, for every $\widetilde{h} \in \widetilde{\mathcal{H}}$, the Gaussian average $G(\widetilde{\mathcal{H}}(\bar{X}))$ becomes:

\begin{multline}
G(\widetilde{\mathcal{H}}(\bar{X})) 
= \mathbb{E}\sup_{\widetilde{h} \in \widetilde{\mathcal{H}}}\sum_{i,\tau}\gamma _{i}\widetilde{h}_{i}(x_{\tau}) = \\
\mathcal{T} \cdot \mathbb{E}\sup_{\widetilde{h} \in \widetilde{\mathcal{H}}}\sum_{i}\gamma _{i}\widetilde{h}_{i}(x_{\widetilde{{\tau}}}) = \mathcal{T} \cdot G(\widetilde{\mathcal{H}}(\bar{X}_{\widetilde{{\tau}}})) \label{eq:gaussian_tilde}
\end{multline}

Since $\mu_{\widetilde{{\tau}}} = \mu_1 = \dotsc = \mu_{\mathcal{T}}$, $\widetilde{\mathcal{H}} \subseteq  \mathcal{H}$, where $\mathcal{H}$ is defined in Section \ref{app:theory}. Therefore, from the property of Gaussian averages we have $T \cdot G(\widetilde{\mathcal{H}}(\bar{X}_{\widetilde{{\tau}}})) \leq G(\mathcal{H}(\bar{X}))$. Similarly, we have:

\begin{multline}
\sup_{\widetilde{h} \in \widetilde{\mathcal{H}}} \Vert \widetilde{h}( \bar{X}) \Vert 
= \sup_{\widetilde{h} \in \widetilde{\mathcal{H}}}\sqrt{\sum_{i,t}\widetilde{h}_{i}(x_{{\tau}})^2} = \sqrt{T} \cdot \sqrt{\sum_{i}\widetilde{h}_{i}(x_{\widetilde{{\tau}}})^2} \\
= \sqrt{\mathcal{T}} \cdot \sup_{\widetilde{h} \in \widetilde{\mathcal{H}}} \Vert \widetilde{h}( \bar{X_{\widetilde{{\tau}}}}) \Vert \leq \sup_{h \in \mathcal{H}} \Vert h( \bar{X}) \Vert \label{eq:h_norm}
\end{multline}

It follows, from equations \ref{eq:gaussian_tilde} and \ref{eq:h_norm}, that:
\begin{multline}
\Delta(\widetilde{\mathcal{L}}) \leq \widetilde{\mathcal{B}}_{\widetilde{\mathcal{H}}} = \\
c_{1}\frac{\rho G(\widetilde{\mathcal{H}}(\bar{X}))}{n\mathcal{T}}+c_{2}\frac{%
Q(\mathcal{F})\sup_{\widetilde{h}\in \widetilde{\mathcal{H}}}\left\Vert \widetilde{h}\left( \bar{X}\right) \right\Vert}{n\sqrt{\mathcal{T}}} + c_{3} \leq \\
c_{1}\frac{\rho G(\mathcal{H}(\bar{X}))}{n\mathcal{T}}+c_{2}\frac{%
Q(\mathcal{F})\sup_{h\in \mathcal{H}}\left\Vert h\left( \bar{X}\right) \right\Vert}{n\sqrt{\mathcal{T}}} + c_{3} = 
\mathcal{B}_{\mathcal{H}}
\label{eq:proof1}
\end{multline}

\end{proof}

Before going over proof of Corollary \ref{cor:upps_gen_err}, we first state the following proposition:

\begin{proposition} 
\label{prop:selec}
If \upps uses a sparse selection of $k$ heads (rules) from $n$ possible choices, then there is a chance that tasks have complete overlap between the chosen rule indices $\{r\}_{\tau}$ from equation \ref{eq:rules}. In this instance, two tasks $\widetilde{{\tau}}, \bar{{\tau}} \in \{1,\dotsc,\mathcal{T}\}, \bar{{\tau}}\neq \widetilde{{\tau}}$ with overlapping rule indices $\{r\}_{\widetilde{{\tau}}}=\{r\}_{\bar{{\tau}}}$ will have $\mu_{\widetilde{{\tau}}} = \mu_{\bar{{\tau}}}$.
\end{proposition}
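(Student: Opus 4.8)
The plan is to argue that the claimed identity $\mu_{\widetilde{\tau}} = \mu_{\bar{\tau}}$ is essentially forced by the assumption $\mu_i = \mu_j$ whenever $\{r\}_i = \{r\}_j$ that was already introduced in Corollary~\ref{cor:upps_gen_err}, together with the combinatorial fact that a sparse top-$k$ selection from $n$ heads can coincide across tasks. So the proposition really has two pieces: (i) show that overlap of rule-index sets is a genuinely possible event under \upps's selection rule, and (ii) show that whenever this event occurs, the induced representation distributions agree.

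First I would establish (i). Since each task $\tau$ selects a set $\{r\}_\tau$ of exactly $k$ indices out of $n$ via the Gumbel top-$k$ operator in Equation~\ref{eq:rules}, the collection of possible selections is the set of $k$-subsets of $\{1,\dots,n\}$, which has cardinality $\binom{n}{k}$. Because $\binom{n}{k}$ is finite while there is no constraint forcing distinct tasks to pick distinct subsets (rule selection is driven only by the task-specific conditions, as noted in Remark~\ref{rem:sum_h}), by a pigeonhole-type observation there is a nonzero probability that two tasks $\widetilde{\tau} \neq \bar{\tau}$ land on the same $k$-subset, i.e. $\{r\}_{\widetilde{\tau}} = \{r\}_{\bar{\tau}}$. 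This is exactly the situation the proposition conditions on.

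Next I would establish (ii). Recall from Remark~\ref{rem:sum_h} that for task $\tau$ the representation function is $g^\tau = \sum_{r \in \{r\}_\tau} \widetilde{\mathrm{H}}_r$, and that the \upps layers share weights, so the head $\widetilde{\mathrm{H}}_r$ is the \emph{same} function regardless of which task selected it. Hence if $\{r\}_{\widetilde{\tau}} = \{r\}_{\bar{\tau}}$ then $g^{\widetilde{\tau}}$ and $g^{\bar{\tau}}$ are the identical sum of the identical heads, so $g^{\widetilde{\tau}} = g^{\bar{\tau}}$ as maps on $\mathcal{X}$. Combined with the standing hypothesis of Corollary~\ref{cor:upps_gen_err} that $\mu_i = \mu_j$ whenever $\{r\}_i = \{r\}_j$ — equivalently, under maximum positive transfer the pushed-forward representation distribution $h(\bar{X}_\tau)\sim\mu_\tau$ depends on the task only through the selected rule set — we conclude $\mu_{\widetilde{\tau}} = \mu_{\bar{\tau}}$, which is the assertion.

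The main obstacle is not any hard calculation but pinning down the precise sense in which the statement is being made: whether it is a probabilistic ``there is a chance'' claim (part (i), needing only that the overlap event has positive probability) or a deterministic implication conditional on overlap (part (ii), which is immediate from weight sharing plus the Corollary's hypothesis on $\mu$). I would make sure the proof separates these cleanly, so that the ``chance of complete overlap'' is justified purely by finiteness of $\binom{n}{k}$, and the equality of distributions is justified purely by the shared-weight structure of the heads together with the assumption already in force in Corollary~\ref{cor:upps_gen_err}; this proposition is then used directly in the proof of that corollary to populate the ``bad'' event on which the looser bound $\mathcal{B}_{\mathcal{H}}$ applies.
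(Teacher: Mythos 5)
The paper offers no proof of this proposition at all: it is stated immediately before Lemma~\ref{lem:prob} in a section titled ``Proofs and Assumptions'' and functions as a modeling assumption, which is then echoed verbatim as a hypothesis of Corollary~\ref{cor:upps_gen_err} (``\ldots and that $\mu_{i} = \mu_{j}$ whenever $\{r\}_i=\{r\}_j$''). Your part (i) --- that complete overlap of the selected $k$-subsets is an event of positive probability because there are only finitely many, namely $\binom{n}{k}$, possible selections and nothing forces distinct tasks onto distinct subsets --- is correct and is consistent with how the paper later quantifies this event via the surjection count in Lemma~\ref{lem:prob}.

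Part (ii), however, has a genuine gap, and it is worth being precise about where. Your first thread correctly establishes that weight sharing gives $g^{\widetilde{\tau}} = g^{\bar{\tau}}$ \emph{as functions} when $\{r\}_{\widetilde{\tau}}=\{r\}_{\bar{\tau}}$. But $\mu_{\tau}$ is defined (Remark~\ref{rem:target_task}) as the distribution of $h(\bar{X}_{\tau})$, i.e.\ the pushforward of the \emph{task-specific} input distribution $\mathcal{D}_{\tau}$ through $h$. Equality of the maps does not imply equality of the pushforwards when the input distributions differ, so $g^{\widetilde{\tau}} = g^{\bar{\tau}}$ alone does not yield $\mu_{\widetilde{\tau}} = \mu_{\bar{\tau}}$. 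Your second thread patches this by invoking the hypothesis of Corollary~\ref{cor:upps_gen_err} that $\mu_i=\mu_j$ whenever $\{r\}_i=\{r\}_j$ --- but that hypothesis \emph{is} the conclusion of the proposition, so the argument becomes circular. The honest resolution, which is what the paper implicitly adopts, is to treat the implication ``same rule set $\Rightarrow$ same representation distribution'' as an assumption (a strong positive-transfer condition in the spirit of Remark~\ref{rem:target_task}) rather than as something derivable from the architecture; your write-up should flag it as such rather than present it as proved.
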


We also state and prove the following lemma:

\begin{lemma}
\label{lem:prob}
For $\mathcal{T}$ tasks, the probability of selecting with replacement $\mathcal{T}k$ unique rules from $N$ possible choice is:
\begin{equation}
    \mathcal{P}(N,k,\mathcal{T}) = \frac{(\mathcal{T}k)!\mathcal{S}(N,\mathcal{T}k)}{k^N},
\end{equation}
 where $\mathcal{S}(N,\mathcal{T}k)$ is the Sterling Number of the second kind.
\end{lemma}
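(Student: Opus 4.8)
The plan is to reduce the claim to a standard occupancy problem and then close it with the classical surjection--Stirling identity. First I would make the combinatorial model behind the statement explicit: there are $\mathcal{T}k$ ``target positions'' --- the $k$ rule slots of each of the $\mathcal{T}$ tasks --- and, since the $\mathcal{T}$ tasks select rules independently and nothing in Equation~\ref{eq:rules} prevents two tasks from reusing a rule, I model the selection as placing the $N$ available rules independently and uniformly into these $\mathcal{T}k$ positions, ``with replacement'' in the sense that several rules may land in the same position. The outcome space then has size $(\mathcal{T}k)^{N}$; I take this as the normalizer, which is the quantity consistent with the values reported in Table~\ref{tab:k_v_n}. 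By Proposition~\ref{prop:selec}, the configurations in which two tasks end up with identical rule-sets --- and hence collapse their representation distributions, $\mu_i=\mu_j$ --- are exactly those excluded when the $\mathcal{T}k$ positions are occupied by $\mathcal{T}k$ pairwise-distinct rules, i.e.\ when the placement map $\{1,\dots,N\}\to\{1,\dots,\mathcal{T}k\}$ is \emph{surjective}; that is the event whose probability is $\mathcal{P}(N,k,\mathcal{T})$, and it is vacuous unless $N\ge \mathcal{T}k$, which is precisely the regime used in Table~\ref{tab:k_v_n}.

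So everything reduces to counting surjections $[N]\twoheadrightarrow[\mathcal{T}k]$, and I would do this in the textbook way. Every surjection arises in exactly one way by (i) partitioning the domain $[N]$ into $\mathcal{T}k$ non-empty unordered blocks, which by the definition of the Stirling number of the second kind can be done in $\mathcal{S}(N,\mathcal{T}k)$ ways, and (ii) choosing a bijection from these blocks onto the codomain $\{1,\dots,\mathcal{T}k\}$, contributing a further $(\mathcal{T}k)!$ ways. Hence there are $(\mathcal{T}k)!\,\mathcal{S}(N,\mathcal{T}k)$ favorable placements, and dividing by the $(\mathcal{T}k)^{N}$ outcomes gives
\begin{equation*}
\mathcal{P}(N,k,\mathcal{T}) \;=\; \frac{(\mathcal{T}k)!\,\mathcal{S}(N,\mathcal{T}k)}{(\mathcal{T}k)^{N}}.
\end{equation*}
As a double-check I would re-derive the numerator by inclusion--exclusion over the events $A_j=\{\text{position }j\text{ is empty}\}$, obtaining $\sum_{j=0}^{\mathcal{T}k}(-1)^{j}\binom{\mathcal{T}k}{j}(\mathcal{T}k-j)^{N}$ --- the familiar closed form of $(\mathcal{T}k)!\,\mathcal{S}(N,\mathcal{T}k)$ --- and verify a couple of entries numerically against the table (for example $N=12$, $\mathcal{T}k=4$ gives $\mathcal{P}\approx 0.87$, i.e.\ $1-\mathcal{P}\approx 0.13$).

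The counting in the second paragraph is entirely routine; the one step that actually carries the argument is the first one --- pinning down the sample space, i.e.\ arguing that the joint ``with replacement'' rule selection of the $\mathcal{T}$ tasks is faithfully represented by $N$ i.i.d.\ uniform placements among $\mathcal{T}k$ task--slot positions so that the denominator is $(\mathcal{T}k)^{N}$. It is this modeling choice, not any inequality, on which the formula rests, so I would devote most of the write-up to stating that assumption precisely and tying it to Equation~\ref{eq:rules} and Proposition~\ref{prop:selec}, and keep the Stirling-number bookkeeping to a few lines.
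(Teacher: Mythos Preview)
Your approach is essentially identical to the paper's: both set up the problem as counting surjections from an $N$-set onto a $\mathcal{T}k$-set and close with the standard identity $\#\{\text{surjections}\}=(\mathcal{T}k)!\,\mathcal{S}(N,\mathcal{T}k)$. Your inclusion--exclusion double-check is extra, but the core argument is the same one-liner the paper gives.

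The one substantive discrepancy is the denominator. The paper's lemma and proof both write the sample space as $k^{N}$ (``an $N$-vector with coefficients in a set of $k$ choices''), whereas you model it as $N$ placements into $\mathcal{T}k$ positions and obtain $(\mathcal{T}k)^{N}$. Your version is the one that actually reproduces the entries of Table~\ref{tab:k_v_n} --- e.g.\ at $N=12$, $\mathcal{T}k=4$ your formula gives $1-\mathcal{P}\approx 0.13$, while $k^{N}=2^{12}$ would make $\mathcal{P}$ exceed $1$. So you have not proved the lemma exactly as stated; you have proved what the authors evidently \emph{meant} and what they used numerically. It would be worth flagging this explicitly rather than only alluding to it as ``the quantity consistent with the values reported in Table~\ref{tab:k_v_n}.''
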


\begin{proof}
    Each sample is equivalent to an $N$-vector with coefficients in a set of $k$ choices. Since there are $k$ ways to specify each coefficient, the total number of samples is $k^N$. Further, from \citeauthor{wagner2020first}, the number $\sigma$ of unique $\mathcal{T}k$ rules from $n$ possible choices is a surjection enumeration problem\footnote{A surjective function $f:A\mapsto B$, when $A$ and $B$ are any sets with $|A|=n$ and $|B|=\mathcal{T}k$.}, where $\sigma=(\mathcal{T}k)!\mathcal{S}(N,\mathcal{T}k)$. 
    
    It follows that the probability of selecting with replacement $\mathcal{T}k$ rules at least once from $N$ possible choice is $\frac{(\mathcal{T}k)!\mathcal{S}(N,\mathcal{T}k)}{k^N}$.
\end{proof}

If $\{r\}_{\tau}$ is a set of $k$ rules randomly chosen from $N$ options, our proof of Corollary \ref{cor:upps_gen_err} follows:

\begin{proof}
From Corollary \ref{cor:lower_gen_err}, we know that if \upps selection mechanism chooses the \emph{same} $k$ rules for all $\mathcal{T}$ tasks then $\mu_1 = \dotsc = \mu_{\mathcal{T}}$ and the generalization bound becomes $\widetilde{\mathcal{B}}_{\widetilde{\mathcal{H}}}$. However, from Lemma \ref{lem:prob}, we can infer that the probability of choosing overlapping rules (not unique) is $1 - \mathcal{P}(N,k,\mathcal{T})$. Therefore, the generalization bound of \upps $\mathcal{B}^{u}_{\mathcal{H}^u}$ is $\widetilde{\mathcal{B}}_{\widetilde{\mathcal{H}}}$ with probability $1 - \mathcal{P}(N,k,\mathcal{T})$ and $\mathcal{B}_{\mathcal{H}}$ with probability $\mathcal{P}(N,k,\mathcal{T})$. It follows that:
\begin{equation}
    \mathbb{E}[\mathcal{B}^{u}_{\mathcal{H}^u}]=(1-\mathcal{P}(N,k,\mathcal{T})) \widetilde{\mathcal{B}}_{\widetilde{\mathcal{H}}} + \mathcal{P}(N,k,\mathcal{T}){\mathcal{B}}_{{\mathcal{H}}}
\end{equation}
\end{proof}

\end{document}